\documentclass{article}

\usepackage{iclr2027_conference,times}
\iclrfinalcopy
\usepackage{amsmath,amssymb,amsthm,bm}
\usepackage{graphicx}
\usepackage{placeins}
\usepackage{subcaption}
\usepackage{booktabs}
\usepackage{tabularx}
\usepackage{multirow}
\usepackage{array}
\usepackage{microtype}
\usepackage{xcolor}
\usepackage{enumitem}
\usepackage{xurl}
\usepackage{hyperref}
\usepackage[nameinlink,capitalize,noabbrev]{cleveref}
\hypersetup{colorlinks=true,citecolor=blue!60!black,linkcolor=blue!60!black,urlcolor=blue!60!black}
\setlist{nosep,leftmargin=*}
\graphicspath{{figures/}}

\newcommand{\E}{\mathbb{E}}

\newtheorem{theorem}{Theorem}
\title{Terminal Shrinkage Averaging Reveals a Schedule-Estimator Interaction in LLM Pretraining}
\author{%
  Adam Ousherovitch \& Yixin Wang\\
  Department of Statistics, University of Michigan\\
  \href{mailto:aoushero@umich.edu}{\texttt{aoushero@umich.edu}}\qquad
  \href{mailto:yixinw@umich.edu}{\texttt{yixinw@umich.edu}}%
}
\hypersetup{pdfauthor={Adam Ousherovitch and Yixin Wang}}
\date{}

\begin{document}
\maketitle
\fancyhead{}
\vspace{-\baselineskip}

\begin{abstract}
Large language model (LLM) pretraining conventionally returns the raw final iterate. This couples two design choices: the learning-rate schedule that generates the parameter trajectory and the estimator that constructs the deployed model (e.g. the raw final iterate or a checkpoint average). A schedule that promotes optimization progress may differ from one that minimizes variation in the raw final iterate. Separating these choices creates an opportunity to maintain progress late in training while reducing variation in the returned model. To this end, we propose \emph{Terminal Shrinkage Averaging (TSA)}, which interpolates between the raw final iterate and the average of recent checkpoints to balance recent progress against terminal variation. We analyze how TSA changes the preferred terminal learning-rate schedule under a local quadratic approximation and test this interaction through a sequence of controlled NanoChat experiments. Finally, we demonstrate that the resulting gains transfer to depth-22 NanoChat, where the combined schedule and estimator improve validation quality. A qualifying time-to-GPT-2 run also finishes faster than the public baseline used in our experiments, providing preliminary evidence of benchmark acceleration.
\end{abstract}

\section{Introduction}

Large language model (LLM) pretraining typically follows a parameter trajectory under a decaying learning-rate schedule and ultimately deploys the raw final iterate \citep{brown2020language,hoffmann2022training,touvron2023llama2}. This convention answers two design questions with a single choice:
\emph{which trajectory should be followed near the end of training}, and \emph{which function of that trajectory should be deployed?}

The end of the schedule is usually a \emph{cooldown}, a terminal
reduction of the learning rate. Its length and specific implementation
materially change endpoint loss and can even reorder optimizer
comparisons
\citep{hoffmann2022training,bergsma2025straight,wen2025fantastic}. A
small terminal learning rate reduces the variation of the raw final
iterate, and it also limits how far the parameters can still move. A
trajectory that stays more active late in training can therefore keep
making progress while arriving at a worse raw final iterate.

\begin{figure}[h]
    \centering
\includegraphics[width=0.8\linewidth]{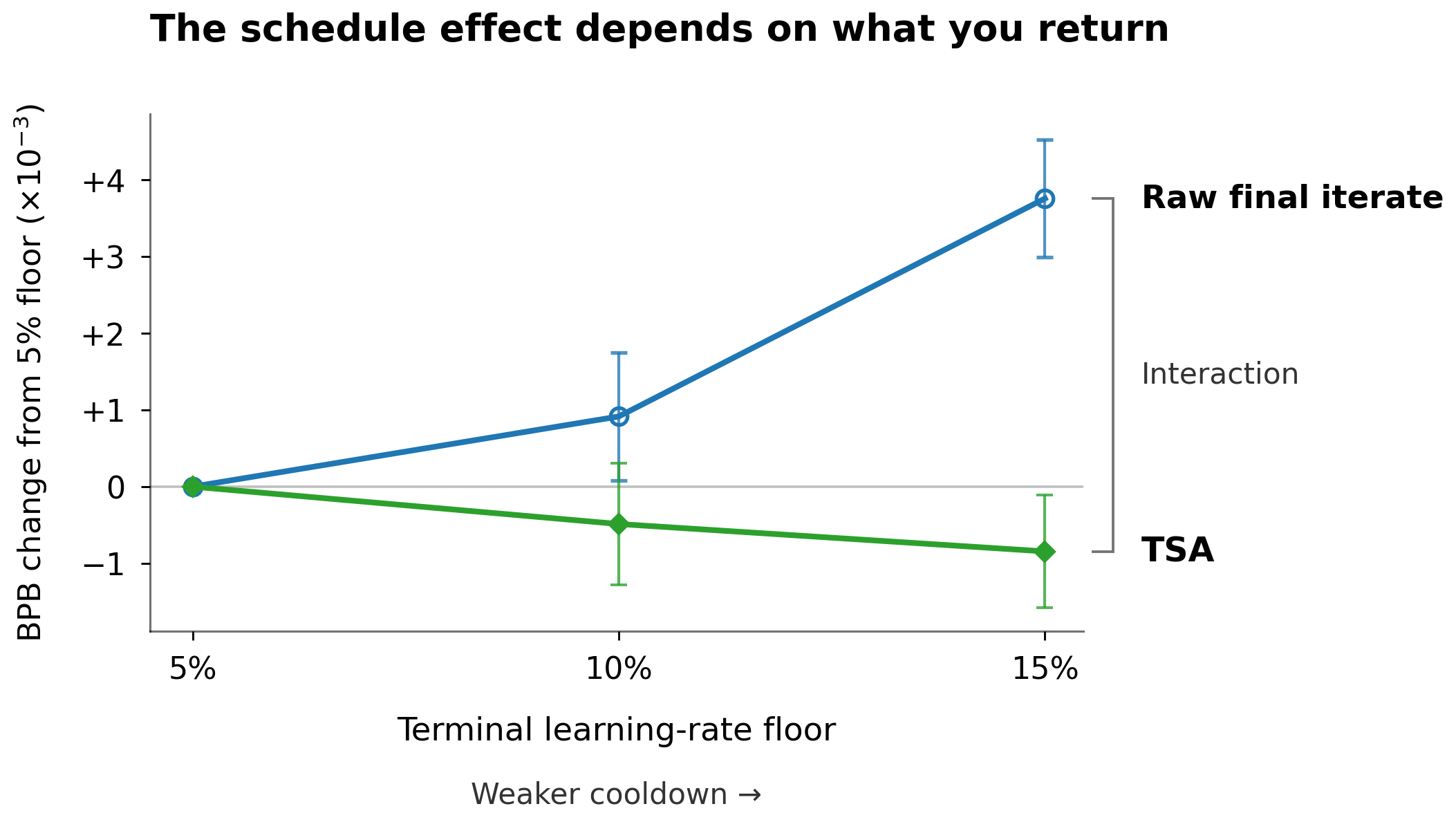}
    \caption{
\textbf{Averaging weights enables more aggressive terminal optimization.}
Raising the terminal learning-rate floor from $5\%$ to $15\%$ of peak worsens validation bits per byte (BPB) for the raw final iterate but improves it for TSA.}
    \label{fig:1a}
\end{figure}

Weight averaging separates the trajectory from the model returned from it \citep{polyak1992acceleration,izmailov2018averaging}. Recent checkpoint-averaging methods such as Latest Weight Averaging (LAWA) have also been used to accelerate pretraining \citep{kaddour2022stop}; in LLM pretraining, averaging can become more effective at higher learning rates \citep{sanyal2024early}. These findings motivate the
question we study: if the deployed model averages checkpoints, does a
weaker cooldown become preferable, and do the strength of the
averaging and the strength of the cooldown need to be chosen together?

To study these questions, we propose terminal shrinkage averaging
(TSA), an interpolation between the raw final iterate and the average of the
last $K$ checkpoints, with a single shrinkage coefficient setting how
much of the checkpoint average enters the returned model.

This interpolation allows us to have control over the strength of averaging. The raw final iterate and uniform LAWA are its two endpoints.
Under a local quadratic approximation of the loss, we derive its exact
surrogate risk, which is quadratic in the shrinkage coefficient. The
risk separates the displacement of the mean output from the reduction
in stochastic variation, demonstrating a bias-variance tradeoff, which
gives conditions under which the minimizer is interior, so that
neither the raw final iterate nor LAWA is optimal. We use the same
approximation to show that an averaged output prefers a more active terminal trajectory than the raw final iterate does.

Our experiments are designed to isolate the effect of averaging, the
effect of the terminal learning rate, and their interaction. Depth-12
NanoChat runs first show that the best shrinkage coefficient lies
strictly between the two endpoints. A paired factorial design then
crosses the terminal learning-rate floor ($5\%$, $10\%$, and $15\%$ of
the peak learning rate) with the output estimator (the raw final iterate
and TSA). Across five repetitions, raising the floor worsens
validation BPB when the raw final iterate is deployed and improves it when
TSA is deployed, and both effects grow with the size of the increase.
A replication under pure AdamW reproduces the direction of this
interaction, so this effect does not depend on NanoChat's Muon+AdamW optimizer. Finally, we apply the higher floor and TSA to the recipe
behind the best depth-22 NanoChat time-to-GPT-2 run, which gives
preliminary evidence that the interaction survives at scale, along
with a qualifying record run.

\paragraph{Contributions.} This work makes the following contributions.
\begin{itemize}
    \item We separate two choices in pretraining: the learning-rate
    schedule, which determines the parameter trajectory, and the \emph{output estimator}, which maps that trajectory to the deployed
    model. We introduce TSA, a shrinkage estimator between the raw final
    iterate and LAWA, and characterize its bias-variance tradeoff under a local quadratic approximation.

    \item Through controlled depth-12 NanoChat experiments, we identify that increasing terminal learning-rate activity is more favorable when TSA is returned, rather than the raw final iterate. Paired repetitions isolate this interaction from the effects of averaging or the terminal schedule alone.

    \item We demonstrate that the resulting gains transfer to the depth-22 NanoChat setting. At a matched training budget, the full schedule-estimator recipe improves validation quality over the underlying public recipe and produces a qualifying time-to-GPT-2 benchmark run.
\end{itemize}

\section{Related work}

\paragraph{Iterate and checkpoint averaging.}
Averaging has classical asymptotic guarantees in convex problems \citep{polyak1992acceleration,neu2018}, while finite-sample analyses of tail averaging show how averaging can reduce the variance of the final output \citep{jain2018parallelizing}. In deep learning, stochastic weight averaging (SWA) returns an average over a late training trajectory and often finds solutions with improved generalization \citep{izmailov2018averaging}. Latest Weight Averaging (LAWA) applies uniform averaging to a finite set of recent checkpoints and frames averaging as a way to accelerate training \citep{kaddour2022stop}.

For language-model pretraining, \citet{sanyal2024early} show that checkpoint averaging becomes more effective at higher learning rates, while \citet{ajroldi2025} characterize when averaging improves large-scale training. These results ask how the benefit of averaging changes under a given training trajectory. Our focus is complementary: we ask whether committing to an averaged output changes which terminal trajectory should be trained in the first place.

Lookahead also interpolates between fast and slow parameter states \citep{zhang2019lookahead}. It performs this interpolation repeatedly during optimization and resets the fast weights after each inner loop. In contrast, TSA leaves the optimizer updates unchanged and performs a single terminal interpolation between the raw final iterate and a finite late-checkpoint average to construct the deployed model.

\paragraph{Learning-rate schedules designed with averaging.}
Several lines of work connecting sustained optimizer activity, cooldowns, and averaging. Warmup-stable-decay schedules retain a long constant-learning-rate phase before a terminal decay \citep{hu2024minicpm}. This paper doesn't touch on averaging. \citet{hagele2024scaling} study constant-rate trajectories with cooldowns and show that SWA can be used in place of cooldowns. By allowing joint tuning of cooldowns and averaging, our method takes their observation that averaging plays a similar role to cooldowns and operationalizes it to improve optimization. Schedule-Free optimization asks the question of how to perform pretraining without knowing the length of the training a priori and strongly relies on learning-rate scheduling and iterate averaging \citep{defazio2024}. Anytime pretraining falls in a similar regime \citep{meterez2026anytime}. Our work is different from these two because it is in the setting where you know the training length in advance.  \citet{tian2026wsm} more formally study the interchangeability of averaging and decay. Their Warmup-Stable and Merge framework uses weighted checkpoint merging to emulate decay from a stable-rate trajectory. However, they do not study the interpolation and interaction between these two techniques. \citet{au2026training} utilize averaging throughout training and develop PACE, which modifies the live AdamW trajectory by pulling parameters toward an exponential moving average. They do not study the interaction between the decay and the averaging, however. TSA leaves the live optimizer iterate unmodified and separately intervenes on terminal learning-rate activity. These are orthogonal directions.

Our setting is very controlled. We retain the underlying horizon-dependent pretraining recipe, alter only its terminal learning-rate activity, and separately vary the returned estimator. This design isolates whether the effect of the terminal schedule depends on which model is ultimately returned.

\paragraph{Pretraining optimizers.}
Adam and AdamW remain standard scalar-preconditioned optimizers \citep{kingma2015,loshchilov2019}. Matrix-preconditioned methods include Shampoo \citep{gupta2018shampoo}, SOAP \citep{vyas2024soap}, and Muon, which orthogonalizes momentum updates for matrix-valued parameters \citep{jordan2024muon,liu2025muon}. Optimizer rankings can depend on model scale, data ratio, tuning budget, and evaluation horizon \citep{wen2025fantastic,wen2026hyperball}. TSA does not alter the underlying optimizer update rule and is therefore complementary to optimizer design. Our primary experiments use NanoChat's native Muon+AdamW optimizer, and the appendix examines how the averaging gain is distributed across the corresponding parameter groups.

\paragraph{Compute-optimal pretraining and capability thresholds.}
Scaling-law studies relate model size, data, and compute \citep{kaplan2020,hoffmann2022training}. NanoChat provides a compact full-stack training harness and a time-to-GPT-2 benchmark on one $8\times$H100 node \citep{karpathy2026nanochat}. Qualification uses the 22-task DCLM CORE metric introduced with DataComp-LM \citep{li2024datacomp}, with the original GPT-2 checkpoint as the reference capability \citep{radford2019gpt2}. We begin from the exact PR~\#830 recipe available at the time of our experiments \citep{zinzi2026pr830} and use it as the base for our matched-endpoint scale-transfer comparisons.

\section{Method}
The model returned from optimization depends on both the saved terminal checkpoints and the rule used to combine them. We first define that rule and analyze its risk for a fixed trajectory. We then ask how changing the rule can change the preferred terminal learning-rate activity.

\subsection{Terminal Shrinkage Averaging}

Consider optimization of a model with $p$ parameters. Let $\theta_t\in\mathbb{R}^p$ denote the iterate at step $t$. At a terminal step $T$, we save $K$ checkpoints separated by $s$ optimizer steps and define their average as
\begin{equation}
    \bar\theta_T := \frac{1}{K}\sum_{j=0}^{K-1}\theta_{T-js}.
    \label{eq:checkpoint-average}
\end{equation}
Terminal Shrinkage Averaging returns
\begin{equation}
    \widehat\theta_T(\alpha) := (1-\alpha)\theta_T+\alpha\bar\theta_T, \qquad \alpha\in[0,1].
    \label{eq:tsa}
\end{equation}
The endpoint $\alpha=0$ returns the raw final iterate, while $\alpha=1$ recovers uniform LAWA. We use LAWA as the averaging endpoint, although the interpolation can use any other averaging estimator in its place. Our goal is to characterize when the optimal model lies strictly between these two endpoints.

Because $\theta_T$ is itself included in $\bar\theta_T$, TSA assigns total weight
\[
    1-\alpha+\frac{\alpha}{K}
\]
to the final checkpoint and weight $\alpha/K$ to each of the other $K-1$ checkpoints.

\paragraph{Local quadratic model.}
To study how the interpolation affects performance, we approximate the loss over the short terminal region visited by the saved checkpoints using a positive-semidefinite quadratic model,
\begin{equation}
    L_{\mathrm{quad}}(\theta) := L(\theta^\star) + \frac{1}{2} (\theta-\theta^\star)^\top H (\theta-\theta^\star), \qquad H\succeq0,
    \label{eq:quadraticloss}
\end{equation}
where $\theta^\star$ is a minimizer of the local model. Since $H$ is positive semidefinite, this optimum may not be unique. This approximation is intended to describe the short late-training window used by TSA rather than the full optimization trajectory. Recent work finds that local linear and quadratic models become particularly accurate late in language-model pretraining \citep{meterez2026defensequadraticmodel}.

For $x,y\in\mathbb{R}^p$, define the curvature-weighted inner product and associated seminorm
\begin{equation}
    \langle x,y\rangle_H := x^\top H y, \qquad \|x\|_H^2 := x^\top Hx.
    \label{eq:h-geometry}
\end{equation}

\paragraph{Mean trajectory and stochastic residuals.}
To separate progress along the terminal trajectory from variation across training runs, we decompose each checkpoint into its mean and a stochastic residual. For notational convenience, write
\[
    \theta_j:=\theta_{T-js}, \qquad j=0,\ldots,K-1,
\]
so that $\theta_0=\theta_T$. Define
\begin{equation}
    \mu_j := \E[\theta_j], \qquad \epsilon_j := \theta_j-\mu_j, \qquad \E[\epsilon_j]=0.
\end{equation}
The expectation is over the training stochasticity, including the order of minibatches.

Define the average of the expected checkpoints and the average stochastic residual as
\begin{equation}
    \bar\mu := \frac{1}{K}\sum_{j=0}^{K-1}\mu_j, \qquad \bar\epsilon := \frac{1}{K}\sum_{j=0}^{K-1}\epsilon_j.
    \label{eq:mean-checkpoint-components}
\end{equation}
We also define
\begin{equation}
    m := \mu_0-\theta^\star, \qquad A := \mu_0-\bar\mu.
    \label{eq:m-and-a}
\end{equation}
$m$ is the displacement of the mean of the raw final iterate from the minimizer of the local quadratic model. $A$ is the displacement between the mean of the raw final iterate and the expected checkpoint average.

Substituting these definitions into Eq.~\ref{eq:tsa} gives the exact decomposition
\begin{align}
    \widehat\theta_T(\alpha)-\theta^\star
    &= (1-\alpha)(\mu_0+\epsilon_0) + \alpha(\bar\mu+\bar\epsilon) - \theta^\star \\
    &= m-\alpha A+\epsilon_0+\alpha q,
    \label{eq:tsa-error-decomposition}
\end{align}
where
\begin{equation}
    q := \bar\epsilon-\epsilon_0.
    \label{eq:q-definition}
\end{equation}
Equation~\ref{eq:tsa-error-decomposition} separates the effect of TSA into two parts. The deterministic term $-\alpha A$ describes how averaging moves the mean output model relative to the local minimizer. The stochastic term $\alpha q$ describes how averaging changes the random deviation around that mean.

We measure performance under the local quadratic surrogate through
\begin{equation}
    \mathcal R(\alpha) := \E\!\left[ L_{\mathrm{quad}} \bigl(\widehat\theta_T(\alpha)\bigr) - L_{\mathrm{quad}}(\theta^\star) \right] = \frac{1}{2} \E\left\| \widehat\theta_T(\alpha)-\theta^\star \right\|_H^2.
    \label{eq:tsa-risk-definition}
\end{equation}
Although $\theta^\star$ need not be unique when $H\succeq0$, this quantity is invariant to the choice of local minimizer because distinct minimizers differ only along directions in the null space of $H$.

\begin{theorem}[Risk of Terminal Shrinkage Averaging]
\label[theorem]{thm:tsa-risk}
Under the local quadratic model, let
\[
    m := \mu_0-\theta^\star, \qquad A := \mu_0-\bar\mu, \qquad q := \bar\epsilon-\epsilon_0,
\]
and define
\[
    V_q := \E\|q\|_H^2, \qquad C := \E\langle\epsilon_0,q\rangle_H.
\]
Then the expected excess risk of TSA satisfies
\begin{equation}
    \mathcal R(\alpha) = \mathcal R(0) + \alpha\bigl( -\langle m,A\rangle_H + C \bigr) + \frac{\alpha^2}{2} \left( \|A\|_H^2+V_q \right).
    \label{eq:tsa-risk-compact}
\end{equation}
Whenever $\|A\|_H^2+V_q>0$, this is a convex quadratic in $\alpha$ and
\begin{equation}
    \alpha^\star = \Pi_{[0,1]} \left( \frac{ \langle m,A\rangle_H-C }{ \|A\|_H^2+V_q } \right).
\label{eq:alpha-star}
\end{equation}
\end{theorem}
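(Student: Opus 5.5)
The plan is to expand the squared $H$-seminorm of the error decomposition in Eq.~\ref{eq:tsa-error-decomposition} and take expectations term by term. By Eq.~\ref{eq:tsa-risk-definition}, $\mathcal R(\alpha)=\tfrac12\E\|X(\alpha)\|_H^2$, where
\[
    X(\alpha) := (m-\alpha A) + (\epsilon_0+\alpha q).
\]
The first bracket is deterministic, because $\mu_j$, $\bar\mu$ and $\theta^\star$ are fixed. The second bracket is random with mean zero, since $\E[\epsilon_j]=0$ for every $j$ and hence $\E[q]=\E[\bar\epsilon]-\E[\epsilon_0]=0$.

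The first step uses bilinearity of $\langle\cdot,\cdot\rangle_H$ to write
\[
    \|X(\alpha)\|_H^2=\|m-\alpha A\|_H^2+2\langle m-\alpha A,\epsilon_0+\alpha q\rangle_H+\|\epsilon_0+\alpha q\|_H^2 .
\]
Because $H$ is a fixed matrix, the cross term has expectation $\langle m-\alpha A,\E[\epsilon_0+\alpha q]\rangle_H=0$. This linearity step needs the residuals to be integrable, and the second moments to be finite for $V_q$ and $C$ to exist. I will state this as a standing assumption.

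The second step expands the two remaining pieces. The deterministic piece is
\[
    \|m\|_H^2-2\alpha\langle m,A\rangle_H+\alpha^2\|A\|_H^2 .
\]
The expectation of the stochastic piece is
\[
    \E\|\epsilon_0\|_H^2+2\alpha C+\alpha^2V_q .
\]
Setting $\alpha=0$ gives $\mathcal R(0)=\tfrac12\left(\|m\|_H^2+\E\|\epsilon_0\|_H^2\right)$. Collecting powers of $\alpha$ and multiplying by $\tfrac12$ then yields Eq.~\ref{eq:tsa-risk-compact}: the linear coefficient is $-\langle m,A\rangle_H+C$ and the quadratic coefficient is $\tfrac12(\|A\|_H^2+V_q)$. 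I will also note that the result does not depend on the choice of $\theta^\star$. Changing $\theta^\star$ moves $m$ only by a vector in $\ker H$, which leaves $\langle m,A\rangle_H$ and $\|m\|_H^2$ unchanged.

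The third step handles the minimizer. If $\|A\|_H^2+V_q>0$, the leading coefficient is positive, so $\mathcal R$ is a strictly convex quadratic on $\mathbb R$. Its unconstrained minimizer is $\alpha_0=(\langle m,A\rangle_H-C)/(\|A\|_H^2+V_q)$. A strictly convex univariate function restricted to $[0,1]$ is minimized at the projection of $\alpha_0$ onto that interval: the function is decreasing to the left of $\alpha_0$ and increasing to the right, so if $\alpha_0<0$ the minimum is at $0$, and if $\alpha_0>1$ it is at $1$. This gives Eq.~\ref{eq:alpha-star}.

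No step is genuinely hard. The one point that needs care is the vanishing of the cross term between the deterministic and stochastic parts. It relies on $m$ and $A$ being nonrandom, which holds because they are defined through expectations and the fixed local minimizer, and on $H$ not depending on the training randomness. I will state both facts explicitly so the cancellation is justified.
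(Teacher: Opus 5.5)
Your proposal is correct and follows essentially the same route as the paper's proof. Both drop the cross term because $\epsilon_0+\alpha q$ has mean zero, expand $\|m-\alpha A\|_H^2$ and $\E\|\epsilon_0+\alpha q\|_H^2$, read off $\mathcal R(0)$, and project the unconstrained minimizer of the convex quadratic onto $[0,1]$. Your explicit integrability assumption and your remark on invariance to the choice of $\theta^\star$ are small additions the paper leaves implicit; the argument is otherwise the same.
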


The linear term in \Cref{eq:tsa-risk-compact} exposes the central tradeoff. Since $A$ points from the average of the saved checkpoints toward the raw final iterate, moving toward the checkpoint average can incur \emph{lag} when the optimizer is still making useful progress. Define
\[
    G_{\mathrm{lag}} := -\langle m,A\rangle_H.
\]
At the same time, averaging can remove stochastic error from the raw final iterate. Define
\[
    G_{\mathrm{noise}} := -C.
\]
Then
\begin{equation}
    \mathcal R(\alpha) = \mathcal R(0) + \alpha \left( G_{\mathrm{lag}}-G_{\mathrm{noise}} \right) + \frac{\alpha^2}{2} \left( \|A\|_H^2+V_q \right).
    \label{eq:risk-lag-noise}
\end{equation}

A small amount of averaging therefore helps when $G_{\mathrm{noise}}>G_{\mathrm{lag}}$: the stochastic error removed by averaging outweighs the cost of moving toward older checkpoints. Full averaging need not be optimal because the quadratic term makes the marginal benefit of increasing $\alpha$ decrease. In particular, an interior optimum occurs when
\[
    0 < G_{\mathrm{noise}}-G_{\mathrm{lag}} < \|A\|_H^2+V_q.
\]
This gives the intuition behind TSA: retain enough weight on the raw final iterate to limit checkpoint lag while averaging strongly enough to reduce terminal variation.

The proof in \Cref{app:tsa-proof} formalizes this fixed-trajectory result, and \Cref{app:quadratic-response} validates empirically that loss varies quadratically in $\alpha$. The next question is whether the same tradeoff changes which trajectory should be trained.

\subsection{A sufficient condition for a more active terminal schedule}
\label{sec:theory-schedule}

The previous analysis holds the training trajectory fixed and changes only the returned estimator. We next ask how changing the estimator can change which terminal trajectory is preferable.

Let $\rho$ parameterize terminal optimizer activity; in our experiments, $\rho$ corresponds to the terminal learning-rate floor. For fixed $\alpha$, write the local risk as
\begin{equation}
    \mathcal R_\alpha(\rho) = B_\alpha(\rho) + S_\alpha(\rho),
\end{equation}
where
\begin{align}
    B_\alpha(\rho) &:= \frac{1}{2} \|m_\rho-\alpha A_\rho\|_H^2,\\
    S_\alpha(\rho) &:= \frac{1}{2} \E\| \epsilon_{0,\rho}+\alpha q_\rho \|_H^2.
\end{align}
The first term captures mean displacement, while the second captures stochastic variation.

Let
\[
    \rho_{\mathrm{raw}} := \arg\min_\rho \mathcal R_0(\rho)
\]
denote the terminal activity preferred when the raw final iterate is returned. $\mathcal R_0'(\rho_{\mathrm{raw}})=0$. Therefore,
\begin{equation}
    \mathcal R_\alpha'(\rho_{\mathrm{raw}}) = \frac{d}{d\rho} \left[ B_\alpha(\rho)-B_0(\rho) \right]_{\rho=\rho_{\mathrm{raw}}} - \frac{d}{d\rho} \left[ S_0(\rho)-S_\alpha(\rho) \right]_{\rho=\rho_{\mathrm{raw}}}.
    \label{eq:schedule-shift}
\end{equation}

Consequently, if the stochastic benefit supplied by averaging grows with terminal activity faster than its additional deterministic cost, then $\mathcal R_\alpha'(\rho_{\mathrm{raw}})<0$. Under local convexity, the risk-minimizing averaged trajectory therefore lies at a larger value of $\rho$.

\paragraph{A one-dimensional corollary.}
We can demonstrate the schedule shift for a simple model using \Cref{eq:schedule-shift}. Consider one noisy terminal update on $L(x)=\tfrac{h}{2}x^2$ and construct TSA from the pre-update and post-update checkpoints. Whenever the risk-minimizing step sizes are interior, the preferred step size under shrinkage strength $\alpha$ is
\begin{equation}
    \eta_\alpha^\star = \frac{\eta_{\mathrm{raw}}^\star}{1-\alpha/2}.
    \label{eq:one-step-eta-shift}
\end{equation}
Thus every $\alpha>0$ strictly increases the preferred terminal step size, and full averaging of the two checkpoints doubles it. The exact risk and proof are provided in \Cref{app:one-step-quadratic}.

This does not assert that increasing terminal activity is always beneficial under averaging, nor does it predict the magnitude of the optimal shift. Rather, it identifies the condition under which changing the returned estimator changes the preferred terminal schedule. The experiments in \Cref{sec:schedule-estimator-interaction} test this interaction directly.

\section{Experiments}

We design the experiments as a sequence of controlled intervention tests. We first ask whether partial averaging improves the output model under a fixed training trajectory. We then ask whether increasing terminal learning-rate activity helps without averaging. Then, we show that our method is robust to choice of optimizer by replicating our results on AdamW. Finally, we combine the interventions to test the central prediction of our theory: an output estimator that reduces terminal variance can prefer a more active learning-rate schedule than the raw final iterate.

We use depth-12 NanoChat models for controlled ablations and a depth-22 model for a final time-to-GPT-2 experiment to demonstrate that our method scales. Unless otherwise stated, all depth-12 comparisons use the same model initialization, optimizer, training batches, and paired validation data so that differences can be attributed to the intervention being studied. We report validation bits per byte (BPB), where lower is better. For the depth-22 speedrun, we additionally report DCLM CORE, for which the qualification threshold is $0.256525$. Full details are provided in \Cref{app:experimental-details}.

\subsection{Partial shrinkage has a replicated interior optimum}
\label{sec:averaging-only}

We first hold each training trajectory fixed and vary only the returned estimator. To test whether the single-trajectory response from our preliminary sweep replicates, we evaluate a dense $\alpha$ grid on five independently ordered confirmation trajectories at the nanochat baseline $5\%$ floor. Every value of $\alpha$ is evaluated on the same held-out batches within a trajectory. The grid is evaluated only after the confirmation estimator has been frozen and is not used to select that rule.

\begin{figure}[h]
    \centering
    \includegraphics[width=0.60\linewidth]{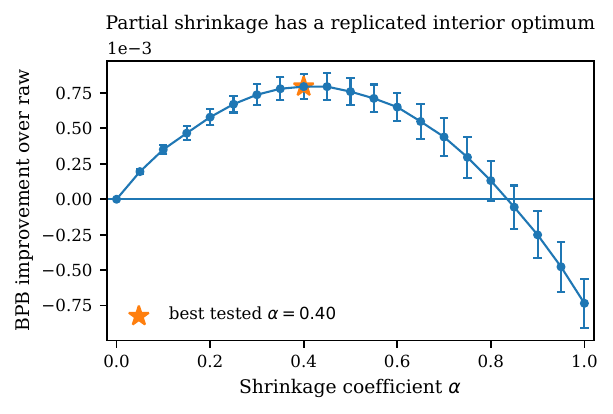}
    \caption{\textbf{Partial shrinkage has a replicated interior optimum.} Points show mean paired BPB improvement over the raw final iterate across five independently ordered depth-12 trajectories at the $5\%$ terminal floor; bars are 95\% $t$-intervals across trajectories. The best tested point is $\alpha=0.40$, whereas full LAWA ($\alpha=1$) is worse than the raw final iterate.}
    \label{fig:d12-averaging}
\end{figure}

The response is broad and clearly interior. The best tested coefficient, $\alpha=0.40$, improves BPB by $0.000794\pm0.000087$; the predeclared confirmation setting $\alpha=0.55$ improves BPB by $0.000711\pm0.000098$. In contrast, uniform LAWA worsens BPB by $0.000734\pm0.000172$. Thus, the benefit of partial rather than full averaging persists across training streams and does not require precise tuning of $\alpha$. The full response at all three terminal floors is reported in \Cref{app:alpha-by-floor} and we compare our method to other common forms of averaging in \Cref{app:additional-d12}.

\subsection{More terminal activity worsens the raw final iterate}
\label{sec:floor-only}

We next vary only the terminal learning-rate floor and always return the raw final iterate. Using the same five paired stream seeds, increasing the floor from $5\%$ to $10\%$ worsens BPB for the raw final iterate by $0.000915\pm0.000836$, while increasing it to $15\%$ worsens BPB for the raw final iterate by $0.003757\pm0.000769$. All five paired differences are positive for both interventions.

\begin{figure}[h]
    \centering
    \includegraphics[width=0.5\linewidth]
    {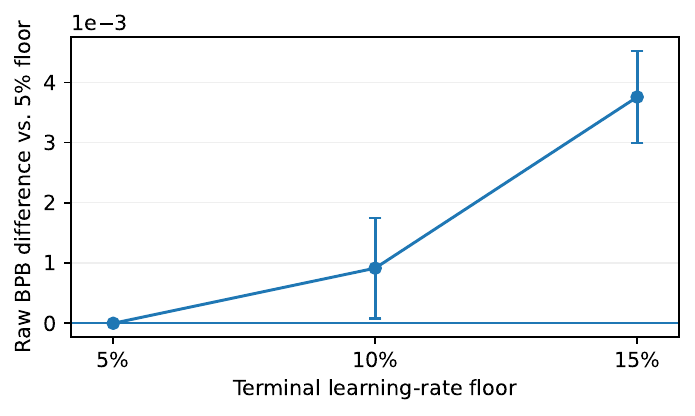}
    \caption{\textbf{More terminal activity worsens the raw final iterate.}
    Points show the mean change in BPB for the raw final iterate relative to the $5\%$ floor across the five streams; bars are 95\% $t$-intervals across streams. The $5\%$ point is zero by construction. Increasing terminal activity degrades the raw final iterate at both tested intervention strengths, with a substantially larger effect at the $15\%$ floor.}
    \label{fig:raw-floor-replicated}
\end{figure}

A denser preliminary single-trajectory sweep through $17.5\%$ shows the same monotone, increasingly steep degradation of the raw final iterate. We report that descriptive sweep separately in \Cref{app:full-preliminary-sweeps}.

These comparisons establish the cost of a more active terminal schedule when the returned model is the raw final iterate. We next test whether TSA changes that cost.

\subsection{The output estimator changes the effect of terminal activity}
\label{sec:schedule-estimator-interaction}

We now vary the schedule and estimator together. We hold TSA fixed at $\alpha=0.55$, $K=8$, and $s=32$, and compare the $5\%$ baseline floor with both $10\%$ and $15\%$ floors over five paired data-order repetitions. For an active floor $\rho$, define the paired interaction
\begin{equation}
    \mathcal I(\rho) := \bigl[L_{\mathrm{raw}}(\rho)-L_{\mathrm{raw}}(5\%)\bigr] - \bigl[L_{\mathrm{TSA}}(\rho)-L_{\mathrm{TSA}}(5\%)\bigr].
    \label{eq:empirical-interaction}
\end{equation}
Positive values mean the active schedule is more favorable under TSA than under the raw final iterate.

\begin{figure}[h]
    \centering
    \includegraphics[width=\linewidth]{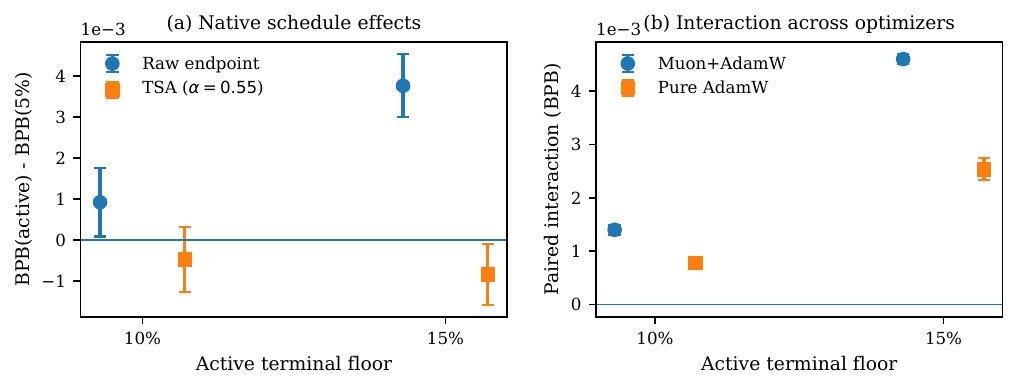}
    \caption{\textbf{The schedule--estimator interaction replicates across intervention strengths and optimizers.} \textbf{Left:} under NanoChat's native Muon+AdamW optimizer, points show the BPB effect of an active floor relative to the $5\%$ floor; positive values mean worse BPB. The $15\%$ floor worsens the raw final iterate but improves TSA. \textbf{Right:} paired interactions from \Cref{eq:empirical-interaction} under the native optimizer and pure AdamW, using the same fixed $\alpha=0.55$. Positive values mean the active floor is more favorable under TSA. Bars are 95\% $t$-intervals over five paired streams.}
    \label{fig:d12-interaction}
\end{figure}

\begin{table}[h]
    \centering
    \caption{\textbf{Two replicated schedule--estimator interventions.} Effects are active-floor BPB minus $5\%$-floor BPB, so positive simple effects indicate worse BPB. The interaction is the effect for the raw final iterate minus that for TSA. Intervals are 95\% $t$-intervals over five paired repetitions.}
    \label{tab:d12-interaction}
    \begin{tabular}{lccc}
        \toprule
        Active floor & Raw final iterate & TSA & Interaction $\mathcal I$ \\
        \midrule
        $10\%$ & $+0.000915\pm0.000836$ & $-0.000487\pm0.000791$ & $\mathbf{+0.001402\pm0.000092}$ \\
        $15\%$ & $+0.003757\pm0.000769$ & $-0.000844\pm0.000737$ & $\mathbf{+0.004601\pm0.000091}$ \\
        \bottomrule
    \end{tabular}
\end{table}

The smaller $10\%$ intervention already yields a clear interaction, even though the TSA simple effect is not individually resolved. At $15\%$, the simple effects themselves point in opposite and statistically resolved directions: the raw final iterate worsens, whereas TSA improves. The paired interaction is positive for every stream at both floors, showing that TSA makes a weaker cooldown more beneficial.

\FloatBarrier

\subsection{The interaction persists under pure AdamW}
\label{sec:optimizer-transfer}

The native optimizer assigns Muon to the main matrix parameter block. To test whether that assignment is necessary for the interaction, we repeat the same $5\%/10\%/15\%$ paired design under pure AdamW. We retain the fixed TSA coefficient $\alpha=0.55$ and measure the within-stream interaction. It is $0.000782\pm0.000089$ BPB for the $10\%$ floor and $0.002536\pm0.000204$ BPB for the $15\%$ floor, and is positive in all five repetitions at both floors (\Cref{fig:d12-interaction}, right). The interaction is smaller than under the native optimizer, but persists without Muon updates, demonstrating that  the phenomenon is not specific to Muon+AdamW.

\subsection{Scale transfer to the NanoChat time-to-GPT-2 benchmark}
\label{sec:d22-speedrun}

The depth-12 results establish the interaction under controlled training conditions. We next test whether the combined schedule and estimator intervention of TSA transfers to the depth-22 NanoChat speedrun setting. We start from the PR~\#830 recipe available at the time of our experiments \citep{zinzi2026pr830}, whose learning-rate schedule is calibrated for a data-to-parameter ratio of $9.4$. We compare three configurations at the same physical ratio-$9.0$ endpoint (step 10,172): an early stop of the original ratio-$9.4$ schedule, the public recipe recalibrated directly to ratio $9.0$, and the ratio-$9.0$ schedule with a $15\%$ terminal learning-rate floor and TSA. For the treatment, we chose $\alpha$ based on the best-performing $\alpha$ in the depth-12 experiment at the $15\%$ floor, which is $0.7$ as seen in \Cref{app:additional-d12}.

Across three training repetitions, TSA improves validation BPB from $0.720235\pm0.000735$ to $0.719580\pm0.000562$ relative to the calibrated baseline (\Cref{tab:d22-matched}). The intervention improves BPB by $0.000654\pm0.000184$ in the paired comparison, with a favorable difference in all three seeds. \footnote{Training-time differences mostly reflect throughput variation across allocations; we do not interpret them as an algorithmic speed difference.}

\begin{table}[h]
    \centering
    \small
    \setlength{\tabcolsep}{3pt}

    \caption{\textbf{Matched-endpoint depth-22 comparison over three seeds.}
    All configurations stop at step $10172$. Time is accumulated training time. Values are means $\pm$ 95\% Student-$t$ intervals.}
    \label{tab:d22-matched}
    \begin{tabular}{lccccc}
        \toprule
        Configuration & Calibration & Floor & Time (min) & Val. BPB $\downarrow$ & CORE $\uparrow$ \\
        \midrule
        PR~\#830 early stop
        & $9.4$ & none
        & $78.028 \pm 2.023$
        & $0.721617 \pm 0.000624$
        & $0.2479 \pm 0.0200$ \\
        PR~\#830 direct $9.0$
        & $9.0$ & none
        & $77.999 \pm 1.938$
        & $0.720235 \pm 0.000735$
        & $0.2491 \pm 0.0106$ \\
        \textbf{$15\%$ floor + TSA}
        & $\mathbf{9.0}$ & $\mathbf{15\%}$
        & $\mathbf{77.896 \pm 0.029}$
        & $\mathbf{0.719580 \pm 0.000562}$
        & $\mathbf{0.2579 \pm 0.0159}$ \\
        \bottomrule
    \end{tabular}
\end{table}

 Relative to the calibrated baseline, the seedwise CORE changes are $+0.0033$, $+0.0124$, and $+0.0108$. The NanoChat qualification threshold is $0.256525$. While the average CORE value exceeds this, only one of the three seeds actually surpassed this threshold. For context, PR~\#830 reports $81.835\pm0.138$ minutes over six runs. Because CORE is discrete and noisy, and only one of our three repetitions individually qualifies, we treat this as a preliminary result with respect to an official record.

\section{Conclusion}
Pretraining need not return the raw final iterate. Once the deployed model is allowed to be a function of the terminal trajectory, the trajectory itself can be optimized differently. We study one simple realization of this idea: keeping optimization more active near the end of training while using Terminal Shrinkage Averaging (TSA) to construct the returned model. A local quadratic analysis characterizes the resulting tradeoff between optimization progress and stochastic variation and explains why partial, rather than uniform, averaging can be optimal.

Our controlled depth-12 experiments isolate the components of the method and establish the central schedule-estimator interaction: increasing terminal learning-rate activity can worsen the raw final iterate while improving the averaged output. At depth 22, the combined intervention improves validation BPB over a matched calibrated baseline in all three paired repetitions and yields one qualifying NanoChat time-to-GPT-2 run. More broadly, these results suggest that the optimization schedule and the rule used to construct the deployed model should be treated as a joint design problem rather than optimized independently.

\paragraph{Limitations.}
TSA explores only a small part of the space of possible output estimators and training trajectories. We use a single global shrinkage coefficient, although different layers or parameter tensors may benefit from different amounts of averaging; we do preliminary explorations of this in \Cref{app:adaptive-tsa} and \Cref{app:structured-tsa}. Likewise, we modify terminal optimization through a simple learning-rate floor, but other changes to cooldown shape, duration, optimizer activity, or checkpoint placement could interact differently with the returned estimator. An issue with averaging is the memory required, which the observations in \Cref{app:groupwise-tsa} help mitigate by showing that averaging's benefit is heterogeneous between tensors. Our theoretical analysis uses a local quadratic approximation and is intended to explain the observed late-training tradeoff rather than provide a complete model of neural-network optimization. Finally, our experiments focus on NanoChat language-model pretraining, and establishing how these interactions transfer across architectures, optimizers, datasets, and substantially larger training regimes remains an important direction for future work.

\clearpage
\bibliography{references}

@inproceedings{ajroldi2025,
  title     = {When, Where and Why to Average Weights?},
  author    = {Ajroldi, Niccol\`o and Orvieto, Antonio and Geiping, Jonas},
  booktitle = {International Conference on Machine Learning},
  year      = {2025}
}

@article{jain2018parallelizing,
  title   = {Parallelizing Stochastic Gradient Descent for Least Squares Regression: Mini-batching, Averaging, and Model Misspecification},
  author  = {Jain, Prateek and Kakade, Sham M. and Kidambi, Rahul and Netrapalli, Praneeth and Sidford, Aaron},
  journal = {Journal of Machine Learning Research},
  volume  = {18},
  number  = {223},
  pages   = {1--42},
  year    = {2018}
}

@inproceedings{bergsma2025straight,
  title     = {Straight to Zero: Why Linearly Decaying the Learning Rate to Zero Works Best for {LLM}s},
  author    = {Bergsma, Shane and Dey, Nolan and Gosal, Gurpreet and Gray, Gavia and Soboleva, Daria and Hestness, Joel},
  booktitle = {International Conference on Learning Representations},
  year      = {2025}
}

@inproceedings{brown2020language,
  title     = {Language Models are Few-Shot Learners},
  author    = {Brown, Tom B. and Mann, Benjamin and Ryder, Nick and Subbiah, Melanie and Kaplan, Jared D. and Dhariwal, Prafulla and Neelakantan, Arvind and Shyam, Pranav and Sastry, Girish and Askell, Amanda and Agarwal, Sandhini and Herbert-Voss, Ariel and Krueger, Gretchen and Henighan, Tom and Child, Rewon and Ramesh, Aditya and Ziegler, Daniel M. and Wu, Jeffrey and Winter, Clemens and Hesse, Christopher and Chen, Mark and Sigler, Eric and Litwin, Mateusz and Gray, Scott and Chess, Benjamin and Clark, Jack and Berner, Christopher and McCandlish, Sam and Radford, Alec and Sutskever, Ilya and Amodei, Dario},
  booktitle = {Advances in Neural Information Processing Systems},
  volume    = {33},
  year      = {2020}
}

@article{defazio2024,
  title   = {The Road Less Scheduled},
  author  = {Defazio, Aaron and Yang, Xingyu Alice and Mehta, Harsh and Mishchenko, Konstantin and Khaled, Ahmed and Cutkosky, Ashok},
  journal = {arXiv preprint arXiv:2405.15682},
  year    = {2024}
}

@inproceedings{gupta2018shampoo,
  title     = {Shampoo: Preconditioned Stochastic Tensor Optimization},
  author    = {Gupta, Vineet and Koren, Tomer and Singer, Yoram},
  booktitle = {International Conference on Machine Learning},
  year      = {2018}
}

@inproceedings{tian2026wsm,
  title     = {{WSM}: Decay-Free Learning Rate Schedule via Checkpoint Merging
               for {LLM} Pre-Training},
  author    = {Tian, Changxin and Wang, Jiapeng and Zhao, Qian and Chen, Kunlong
               and Liu, Jia and Liu, Ziqi and Mao, Jiaxin and Zhao, Wayne Xin
               and Zhang, Zhiqiang and Zhou, Jun},
  booktitle = {International Conference on Learning Representations},
  year      = {2026},
  url       = {https://openreview.net/forum?id=HhThhjKyfw}
}

@misc{au2026training,
  title         = {Training for the Model You Return: Improving Optimization
                   for Iterate-Averaged Language Models},
  author        = {Au, Kwok Chun and Block, Adam},
  year          = {2026},
  eprint        = {2606.25086},
  archivePrefix = {arXiv},
  url           = {https://arxiv.org/abs/2606.25086}
}

@inproceedings{hagele2024scaling,
  title     = {Scaling Laws and Compute-Optimal Training Beyond Fixed Training Durations},
  author    = {H{{\"a}}gele, Alexander and Bakouch, Elie and Kosson, Atli and Ben Allal, Loubna and von Werra, Leandro and Jaggi, Martin},
  booktitle = {Advances in Neural Information Processing Systems},
  year      = {2024},
  url       = {https://arxiv.org/abs/2405.18392}
}

@article{hoffmann2022training,
  title   = {Training Compute-Optimal Large Language Models},
  author  = {Hoffmann, Jordan and Borgeaud, Sebastian and Mensch, Arthur and Buchatskaya, Elena and Cai, Trevor and Rutherford, Eliza and de Las Casas, Diego and Hendricks, Lisa Anne and Welbl, Johannes and Clark, Aidan and Hennigan, Tom and Noland, Eric and Millican, Katie and van den Driessche, George and Damoc, Bogdan and Guy, Aurelia and Osindero, Simon and Simonyan, Karen and Elsen, Erich and Rae, Jack W. and Vinyals, Oriol and Sifre, Laurent},
  journal = {arXiv preprint arXiv:2203.15556},
  year    = {2022}
}

@misc{hu2024minicpm,
  title         = {{MiniCPM}: Unveiling the Potential of Small Language Models with Scalable Training Strategies},
  author        = {Hu, Shengding and Tu, Yuge and Han, Xu and He, Chaoqun and Cui, Ganqu and Long, Xiang and Zheng, Zhi and Fang, Yewei and Huang, Yuxiang and Zhao, Weilin and Zhang, Xinrong and Thai, Zheng Leng and Zhang, Kaihuo and Wang, Chongyi and Yao, Yuan and Zhao, Chenyang and Zhou, Jie and Cai, Jie and Zhai, Zhongwu and Ding, Ning and Jia, Chao and Zeng, Guoyang and Li, Dahai and Liu, Zhiyuan and Sun, Maosong},
  year          = {2024},
  eprint        = {2404.06395},
  archivePrefix = {arXiv},
  primaryClass  = {cs.CL},
  url           = {https://arxiv.org/abs/2404.06395}
}

@inproceedings{izmailov2018averaging,
  title     = {Averaging Weights Leads to Wider Optima and Better Generalization},
  author    = {Izmailov, Pavel and Podoprikhin, Dmitrii and Garipov, Timur and Vetrov, Dmitry and Wilson, Andrew Gordon},
  booktitle = {Proceedings of the Thirty-Fourth Conference on Uncertainty in Artificial Intelligence},
  year      = {2018}
}

@misc{jordan2024muon,
  title        = {{Muon}: An Optimizer for Hidden Layers in Neural Networks},
  author       = {Jordan, Keller and Jin, Yuchen and Boza, Vlado and Jiacheng, You and Cesista, Franz and Newhouse, Laker and Bernstein, Jeremy},
  year         = {2024},
  howpublished = {\url{https://github.com/KellerJordan/Muon}}
}

@article{kaplan2020,
  title   = {Scaling Laws for Neural Language Models},
  author  = {Kaplan, Jared and McCandlish, Sam and Henighan, Tom and Brown, Tom B. and Chess, Benjamin and Child, Rewon and Gray, Scott and Radford, Alec and Wu, Jeffrey and Amodei, Dario},
  journal = {arXiv preprint arXiv:2001.08361},
  year    = {2020}
}

@misc{karpathy2026nanochat,
  title        = {nanochat: A Minimal Experimental Harness for Training Language Models},
  author       = {Karpathy, Andrej and contributors},
  year         = {2026},
  howpublished = {\url{https://github.com/karpathy/nanochat}}
}

@article{kaddour2022stop,
  title   = {Stop Wasting My Time! Saving Days of {ImageNet} and {BERT} Training with Latest Weight Averaging},
  author  = {Kaddour, Jean},
  journal = {arXiv preprint arXiv:2209.14981},
  year    = {2022}
}

@inproceedings{kingma2015,
  title     = {Adam: A Method for Stochastic Optimization},
  author    = {Kingma, Diederik P. and Ba, Jimmy},
  booktitle = {International Conference on Learning Representations},
  year      = {2015}
}

@inproceedings{li2024datacomp,
  title     = {{DataComp-LM}: In Search of the Next Generation of Language Model Pretraining Datasets},
  author    = {Li, Jeffrey and others},
  booktitle = {Advances in Neural Information Processing Systems, Datasets and Benchmarks Track},
  year      = {2024},
  note      = {arXiv:2406.11794}
}

@article{liu2025muon,
  title   = {{Muon} is Scalable for {LLM} Training},
  author  = {Liu, Jingyuan and Su, Jianlin and Yao, Xingcheng and Jiang, Zhejun and Lai, Guokun and Du, Yulun and Qin, Yidao and Xu, Weixin and Lu, Enzhe and Yan, Junjie and others},
  journal = {arXiv preprint arXiv:2502.16982},
  year    = {2025}
}

@inproceedings{loshchilov2019,
  title     = {Decoupled Weight Decay Regularization},
  author    = {Loshchilov, Ilya and Hutter, Frank},
  booktitle = {International Conference on Learning Representations},
  year      = {2019}
}

@article{meterez2026anytime,
  title   = {Anytime Pretraining: Horizon-Free Learning-Rate Schedules with Weight Averaging},
  author  = {Meterez, Alexandru and Nair, Pranav Ajit and Morwani, Depen and Pehlevan, Cengiz and Kakade, Sham},
  journal = {arXiv preprint arXiv:2602.03702},
  year    = {2026}
}

@misc{meterez2026defensequadraticmodel,
  title         = {A Defense of the Quadratic Model},
  author        = {Meterez, Alexandru and Nair, Pranav Ajit and Morwani, Depen and Pehlevan, Cengiz and Kakade, Sham and Damian, Alex},
  year          = {2026},
  eprint        = {2607.21716},
  archivePrefix = {arXiv},
  primaryClass  = {cs.LG},
  url           = {https://arxiv.org/abs/2607.21716}
}

@article{neu2018,
  title   = {Iterate Averaging as Regularization for Stochastic Gradient Descent},
  author  = {Neu, Gergely and Rosasco, Lorenzo},
  journal = {arXiv preprint arXiv:1802.08009},
  year    = {2018}
}

@article{polyak1992acceleration,
  title   = {Acceleration of Stochastic Approximation by Averaging},
  author  = {Polyak, Boris T. and Juditsky, Anatoli B.},
  journal = {SIAM Journal on Control and Optimization},
  volume  = {30},
  number  = {4},
  pages   = {838--855},
  year    = {1992},
  doi     = {10.1137/0330046}
}

@techreport{radford2019gpt2,
  title       = {Language Models are Unsupervised Multitask Learners},
  author      = {Radford, Alec and Wu, Jeffrey and Child, Rewon and Luan, David and Amodei, Dario and Sutskever, Ilya},
  institution = {OpenAI},
  year        = {2019}
}

@inproceedings{sanyal2024early,
  title     = {Early Weight Averaging Meets High Learning Rates for {LLM} Pre-Training},
  author    = {Sanyal, Sunny and Neerkaje, Atula Tejaswi and Kaddour, Jean and Kumar, Abhishek and Sanghavi, Sujay},
  booktitle = {First Conference on Language Modeling},
  year      = {2024},
  url       = {https://openreview.net/forum?id=IA8CWtNkUr}
}

@article{touvron2023llama2,
  title   = {Llama 2: Open Foundation and Fine-Tuned Chat Models},
  author  = {Touvron, Hugo and Martin, Louis and Stone, Kevin and Albert, Peter and Almahairi, Amjad and Babaei, Yasmine and Bashlykov, Nikolay and Batra, Soumya and Bhargava, Prajjwal and Bhosale, Shruti and Bikel, Dan and Blecher, Lukas and Ferrer, Cristian Canton and Chen, Moya and Cucurull, Guillem and Esiobu, David and Fernandes, Jude and Fu, Jeremy and Fu, Wenyin and Fuller, Brian and Gao, Cynthia and Goswami, Vedanuj and Goyal, Naman and Hartshorn, Anthony and Hosseini, Saghar and Hou, Rui and Inan, Hakan and Kardas, Marcin and Kerkez, Viktor and Khabsa, Madian and Kloumann, Isabel and Korenev, Artem and Koura, Punit Singh and Lachaux, Marie-Anne and Lavril, Thibaut and Lee, Jenya and Liskovich, Diana and Lu, Yinghai and Mao, Yuning and Martinet, Xavier and Mihaylov, Todor and Mishra, Pushkar and Molybog, Igor and Nie, Yixin and Poulton, Andrew and Reizenstein, Jeremy and Rungta, Rashi and Saladi, Kalyan and Schelten, Alan and Silva, Ruan and Smith, Eric Michael and Subramanian, Ranjan and Tan, Xiaoqing Ellen and Tang, Binh and Taylor, Ross and Williams, Adina and Kuan, Jian Xiang and Xu, Puxin and Yan, Zheng and Zarov, Iliyan and Zhang, Yuchen and Fan, Angela and Kambadur, Melanie and Narang, Sharan and Rodriguez, Aurelien and Stojnic, Robert and Edunov, Sergey and Scialom, Thomas},
  journal = {arXiv preprint arXiv:2307.09288},
  year    = {2023}
}

@article{vyas2024soap,
  title   = {{SOAP}: Improving and Stabilizing Shampoo using Adam},
  author  = {Vyas, Nikhil and Morwani, Depen and Zhao, Rosie and Kwun, Mujin and Shapira, Itai and Brandfonbrener, David and Janson, Lucas and Kakade, Sham},
  journal = {arXiv preprint arXiv:2409.11321},
  year    = {2024}
}

@article{wen2025fantastic,
  title   = {Fantastic Pretraining Optimizers and Where to Find Them},
  author  = {Wen, Kaiyue and Hall, David and Ma, Tengyu and Liang, Percy},
  journal = {arXiv preprint arXiv:2509.02046},
  year    = {2025}
}

@article{wen2026hyperball,
  title   = {Fantastic Pretraining Optimizers and Where to Find Them {II}: Hyperball Optimization},
  author  = {Wen, Kaiyue and Dang, Xingyu and Lyu, Kaifeng and Ma, Tengyu and Liang, Percy},
  journal = {arXiv preprint arXiv:2606.16899},
  year    = {2026}
}

@inproceedings{zhang2019lookahead,
  title     = {Lookahead Optimizer: k Steps Forward, 1 Step Back},
  author    = {Zhang, Michael R. and Lucas, James and Ba, Jimmy and Hinton, Geoffrey E.},
  booktitle = {Advances in Neural Information Processing Systems},
  volume    = {32},
  year      = {2019}
}

@misc{zinzi2026pr830,
  title        = {Speed up the {8xH100} {GPT-2} Run to 81.8 Minutes},
  author       = {Zinzi, Giovanni},
  year         = {2026},
  howpublished = {nanochat pull request \#830, \url{https://github.com/karpathy/nanochat/pull/830}}
}
\bibliographystyle{iclr2027_conference}

\clearpage
\appendix

\section{Experimental details}
\label{app:experimental-details}

This section specifies the training trajectories, estimator selection, evaluation protocol, and scale-transfer comparison underlying the main experiments.

\subsection{Depth-12 model and common training setup}
\label{app:d12-details}

The depth-12 experiments use the NanoChat architecture with 12 transformer layers and 286,261,730 trainable parameters. Runs use sequence length 512, device batch size 4, gradient accumulation 16, and therefore 32,768 tokens per optimizer step. Every trajectory is trained for $T=3000$ optimizer steps, with the first 40 steps used for learning-rate warmup. The model initialization seed is 1337 in every run; training stochasticity is introduced by an explicit permutation of the 3000 stored optimizer-step batch blocks.

The native optimizer follows NanoChat's heterogeneous parameter assignment: Muon updates the main matrix-valued parameter block, while AdamW updates the remaining parameter groups. The corresponding learning-rate settings are matrix learning rate $0.02$, embedding learning rate $0.2$, unembedding learning rate $0.004$, scalar learning rate $0.5$, and weight decay $0.28$. In the pure-AdamW control, the matrix block is instead updated by AdamW with learning rate $0.003$, $\beta_1=0.9$, $\beta_2=0.95$, $\epsilon=10^{-10}$, and matrix weight decay $0.1$; the remaining group-specific settings are inherited from the same pack. We use the pure-AdamW arm only for within-optimizer estimator comparisons, not to rank the two optimizers by absolute BPB.

A terminal floor $\rho$ means that we follow the original cooldown until its learning-rate multiplier reaches $\rho$ and then hold the multiplier fixed for the remainder of training. Unless otherwise stated, TSA uses $K=8$ checkpoints separated by $s=32$ steps. The selected steps end at step 3000 and span 224 optimizer steps, approximately 7.5\% of training. Snapshots are stored in bfloat16 and all returned candidates are materialized and evaluated in float32.

\subsection{Development and confirmation split}
\label{app:development-confirmation}

The new replicated suite separates rule development from confirmation. The development stage contains five streams for each optimizer at the $10\%$ floor, with stream seeds
\[
    \{6103,7203,8303,9403,10503\}.
\]
On a calibration block, it evaluates a scalar grid $\alpha\in\{0,0.05,\ldots,1\}$ and a two-group grid $\alpha_A,\alpha_B\in\{0,0.125,\ldots,1\}$. Rules maximize mean paired BPB gain over the raw final iterate across the five development streams, with deterministic ties favoring less extreme coefficients.

The confirmation stage contains five new stream seeds,
\[
    \{11103,12203,13303,14403,15503\},
\]
and crosses two optimizers with terminal floors $5\%$, $10\%$, and $15\%$, for 30 independently trained trajectories. Within each stream seed, all floor and optimizer arms begin from the same sampled initial parameter vector and use the same explicit data-order permutation. A smoke test also verifies matching sampled training and evaluation data fingerprints across the native and pure-AdamW packs. Confirmation trajectories are trained and saved before the rule-freeze job, but are not evaluated until the frozen-rule file exists.

The native development stage selects scalar $\alpha=0.55$. This is also the pre-existing main-paper setting, and it is used unchanged in both optimizer regimes in \Cref{sec:optimizer-transfer}. Pure-AdamW development separately selects $\alpha=0.50$; optimizer-specific selected results are reported only in the structured-estimator appendix below. The distinction lets the main text test transfer of the same estimator while retaining the full development study for diagnostics.

\subsection{Evaluation protocol and uncertainty}
\label{app:evaluation-details}

Each pack contains 256 stored evaluation batches. The replicated suite reserves batches 0--31 for the earlier curve block, batches 32--95 for coefficient calibration, batches 96--159 for floor-development bookkeeping, and batches 160--255 as the untouched 96-batch confirmation holdout. All estimators constructed from a given trajectory are evaluated on exactly the same holdout batches.

For single-trajectory sweeps, intervals quantify paired variation across held-out evaluation batches and do not measure training-run uncertainty. For the replicated experiments, all intervals in the main text and this appendix are 95\% Student-$t$ intervals across five independently ordered training streams. Estimator gains and schedule--estimator interactions are computed within stream before averaging, preserving the paired design.

The schedule effects for the raw final iterate and TSA are strongly correlated across paired streams ($r=0.995$ for the $5\%\!\to\!10\%$ intervention and $r=0.994$ for $5\%\!\to\!15\%$ under the native optimizer). This common-mode variation is removed by the within-stream interaction, explaining why its uncertainty is substantially smaller than that of either simple schedule effect.

\subsection{Depth-22 scale-transfer protocol}
\label{app:d22-details}

The depth-22 experiments start from NanoChat PR~\#830 at commit \nolinkurl{e09bc164162f35da0b5b8315be791e9e974a4c3a}.
The recipe uses a depth-22 model, a 49,152-token tokenizer, FP8 training, fused Liger cross-entropy, device batch size 32, and learnable RMSNorm scales. A training ratio denotes the target data-to-parameter ratio; ratio $9.0$ corresponds to optimizer step 10,172, whereas the original PR~\#830 ratio-$9.4$ schedule has a horizon of 10,624 steps.

We compare three configurations at the same physical ratio-$9.0$ endpoint. First, the \emph{early-stop} control follows the original ratio-$9.4$ PR~\#830 schedule but stops at step 10,172 and returns the raw final iterate. Second, the \emph{direct-$9.0$} control recalibrates the schedule itself to ratio $9.0$, again with no added terminal floor, and returns the raw final iterate. Third, the treatment uses the same direct ratio-$9.0$ schedule, clamps the terminal learning-rate multiplier from below at $15\%$ of peak, and returns TSA. Each configuration is evaluated for experiment seeds $42$, $43$, and $44$.

For TSA, we use $K=8$ checkpoints at
\[
9381,\ 9494,\ 9607,\ 9720,\ 9833,\ 9946,\ 10059,\ 10172,
\]
corresponding to a spacing of 113 optimizer steps and a total terminal window of 791 steps, approximately $7.8\%$ of the ratio-$9.0$ endpoint horizon. The 113-step spacing was originally obtained by transferring the depth-12 spacing fraction,
\[
    \frac{32}{3000}\approx 1.07\%,
\]
to the ratio-$9.4$ depth-22 schedule horizon. When recalibrating the treatment schedule to ratio $9.0$, we intentionally retain these same physical checkpoint steps rather than rescaling the averaging window. Thus, the direct-$9.0$ control and treatment differ in terminal learning-rate activity and returned estimator, while the TSA checkpoint window remains fixed.

We select the depth-22 TSA coefficient by selecting the best-performing $\alpha$ in the depth-12 experiment for a $15\%$ floor. We therefore used $\alpha=0.70$ from \Cref{app:additional-d12}.

For each configuration, we report canonical validation BPB and DCLM CORE over the three experiment seeds. Training time is NanoChat's accumulated training-iteration time through step 10,172 rather than end-to-end wall-clock duration. TSA introduces no additional optimizer steps. Because the repetitions span separate allocations, small timing differences mostly include hardware-session variation; we therefore use training time primarily to establish the matched compute budget rather than to claim an algorithmic speed difference between configurations.

\clearpage

\section{Additional depth-12 results}
\label{app:additional-d12}

The main text uses fixed scalar TSA to isolate the schedule--estimator interaction. Here we examine the coefficient response across conditions, sensitivity to checkpoint selection, and whether alternative estimators or parameter-specific rules change that picture.

\subsection{Replicated shrinkage responses across floors and optimizers}
\label{app:alpha-by-floor}

After freezing the confirmation estimators, we evaluate the complete scalar $\alpha$ grid on the untouched holdouts. These curves are descriptive and do not alter the predeclared confirmation rule.

\begin{figure}[h]
    \centering
    \includegraphics[width=\linewidth]{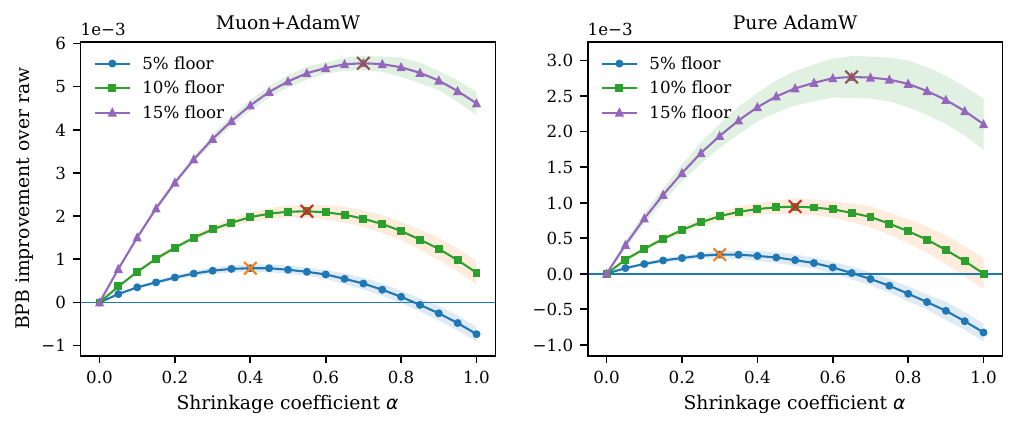}
    \caption{\textbf{The shrinkage response remains interior and shifts with terminal activity.} Curves show mean paired BPB improvement over the raw final iterate across five confirmation trajectories; shaded regions are 95\% $t$-intervals across streams. Crosses mark the best tested coefficients. Under the native optimizer the best tested $\alpha$ moves from $0.40$ to $0.55$ to $0.70$ as the floor rises from $5\%$ to $10\%$ to $15\%$; under pure AdamW it moves from $0.30$ to $0.50$ to $0.65$.}
    \label{fig:app-alpha-by-floor}
\end{figure}

\begin{table}[h]
    \centering
    \caption{\textbf{Descriptive optima of the post-freeze scalar grids.} The grids were evaluated on confirmation holdouts after the estimator rules had been frozen and were not used for selection.}
    \label{tab:app-alpha-optima}
    \begin{tabular}{llcc}
        \toprule
        Optimizer & Floor & Best tested $\alpha$ & Gain over the raw final iterate \\
        \midrule
        Muon+AdamW & 5\%  & 0.40 & $0.000794\pm0.000087$ \\
                   & 10\% & 0.55 & $0.002113\pm0.000154$ \\
                   & 15\% & 0.70 & $0.005543\pm0.000166$ \\
        \midrule
        Pure AdamW & 5\%  & 0.30 & $0.000269\pm0.000053$ \\
                   & 10\% & 0.50 & $0.000945\pm0.000116$ \\
                   & 15\% & 0.65 & $0.002770\pm0.000295$ \\
        \bottomrule
    \end{tabular}
\end{table}

The systematic rightward shift is consistent with the paper's comparative statics: when the terminal trajectory is more active, the returned model can benefit from stronger shrinkage. It also clarifies why a single moderate value such as $\alpha=0.55$ is robust across the tested settings even though it is not the pointwise optimum at every floor. Although the pointwise optimum shifts with terminal activity, the fixed $\alpha=0.55$ rule remains effective across the native-optimizer conditions: it retains approximately 90\%, 100\%, and 96\% of the best observed scalar-TSA gain at the 5\%, 10\%, and 15\% floors, respectively. Thus the interaction does not depend on precise coefficient tuning.

\subsection{Sensitivity to checkpoint count and spacing}
\label{app:ks-sensitivity}

We train five additional depth-12 trajectories with the $10\%$ floor and stream seeds $\{1103,2203,3303,4403,5503\}$. Every estimator is evaluated post hoc on the same five trajectories. We fix $\alpha=0.55$ and vary $K\in\{4,8,16\}$ and $s\in\{16,32,64\}$.

\begin{figure}[h]
    \centering
    \includegraphics[width=0.56\linewidth]{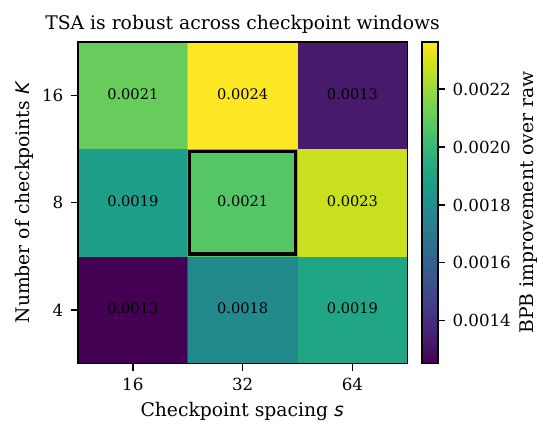}
    \caption{\textbf{TSA remains beneficial across checkpoint counts and spacings.} Cells show mean BPB improvement over the raw final iterate across five training repetitions. The outlined cell is the main-paper setting $K=8,s=32$.}
    \label{fig:app-ks-sensitivity}
\end{figure}

TSA improves over the raw final iterate for all nine windows. The main $K=8,s=32$ setting gains $0.002070\pm0.000205$ BPB; the best tested window, $K=16,s=32$, gains $0.002363\pm0.000207$. Uniform LAWA is more sensitive: at $K=8,s=32$ it gains only $0.000684\pm0.000261$, and at $K=16,s=64$ it is worse than the raw final iterate by $0.005222\pm0.000626$. Partial shrinkage therefore broadens the useful range of checkpoint windows.

These trajectories and their evaluation block are separate from the main confirmation suite, so their nominally identical $K=8,s=32$ estimate need not numerically equal \Cref{tab:app-alpha-optima}.

\subsection{Alternative output estimators}
\label{app:averaging-definitions}

The checkpoint-window results motivate a comparison with other ways to weight recent checkpoints. We use the following returned estimators.

Let $\theta^{(1)},\ldots,\theta^{(K)}$ denote selected checkpoints ordered from oldest to newest, with $\theta^{(K)}=\theta_T$.

\paragraph{Raw final iterate.}
$\widehat\theta_{\mathrm{raw}}=\theta_T$.

\paragraph{Latest Weight Averaging (LAWA).}
\[
    \widehat\theta_{\mathrm{LAWA}} =\frac{1}{K}\sum_{i=1}^{K}\theta^{(i)}.
\]

\paragraph{Terminal Shrinkage Averaging.}
\[
    \widehat\theta_{\mathrm{TSA}}(\alpha) =(1-\alpha)\theta_T+\alpha\widehat\theta_{\mathrm{LAWA}}.
\]
Thus the raw final iterate and LAWA are the endpoints $\alpha=0$ and $\alpha=1$.

\paragraph{Finite-window EWA.}
\[
    \widehat\theta_{\mathrm{EWA}}(\beta) = \frac{\sum_{i=1}^{K}\beta^{K-i}\theta^{(i)}} {\sum_{i=1}^{K}\beta^{K-i}}, \qquad 0<\beta<1.
\]
Matched-window controls use the same $K=8,s=32$ checkpoints and $\beta\in\{0.50,0.75,0.90,0.95\}$.

\paragraph{Checkpoint EMA.}
The checkpoint-EMA control uses the same normalized exponential rule over a denser, longer sequence with $\beta=0.95,K=16,s=16$.

\paragraph{SWA-style late average.}
The SWA-style control is a broader uniform average with $K=32,s=16$. It is mathematically a late checkpoint average but spans a substantially longer terminal window than the main LAWA comparison.

More generally, shrinkage can be applied toward any fixed weighted checkpoint estimator. The same local quadratic decomposition follows after redefining the averaged endpoint. We focus on the uniform finite-window average to keep the method and controlled comparisons simple.

\begin{figure}[h]
    \centering
    \includegraphics[width=0.64\linewidth]{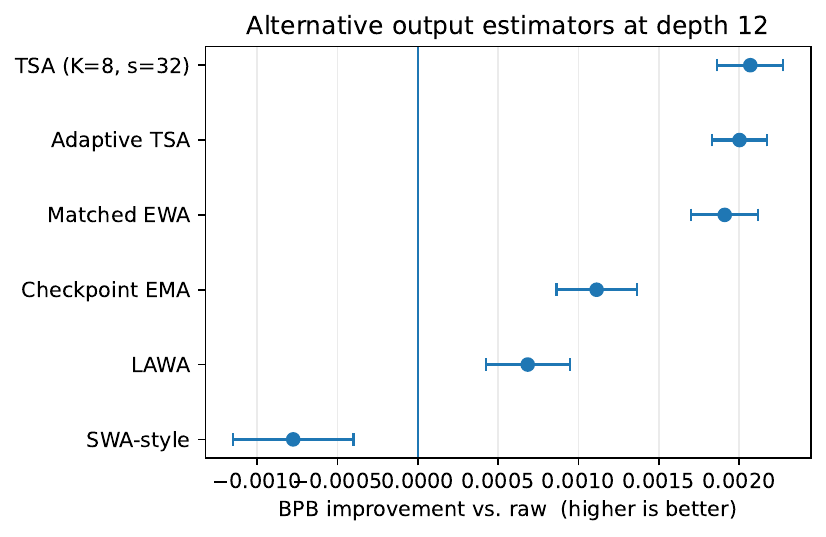}
    \caption{\textbf{Alternative output estimators at depth 12.} Points show mean paired BPB improvement over the raw final iterate, with 95\% $t$-intervals across five training repetitions. Matched EWA denotes the best tested finite-window EWA ($\beta=0.75$) on the same $K=8,s=32$ checkpoints as TSA.}
    \label{fig:app-estimator-family}
\end{figure}

The strongest matched EWA gains $0.001910\pm0.000209$ BPB. TSA gains $0.002070\pm0.000205$, with a paired TSA-over-EWA advantage of $0.000159\pm0.000101$. Checkpoint EMA gains $0.001113\pm0.000250$, while the long SWA-style average is worse than the raw final iterate by $0.000775\pm0.000375$. Recency weighting is useful, but the simple endpoint-shrinkage family remains competitive.

\subsection{Exploratory tensorwise adaptive TSA}
\label{app:adaptive-tsa}

The preceding comparisons use one weighting rule across all parameters, implicitly treating their terminal lag--variation tradeoffs alike. To explore whether terminal updates can inform tensor-specific coefficients, let $\Delta_i^{(g)}=\theta_i^{(g)}-\theta_{i-1}^{(g)}$ be the update between adjacent saved checkpoints for tensor $g$, and let $\bar\Delta^{(g)}$ be the mean update. Define coherent drift energy
\[
    D_g=\|\bar\Delta^{(g)}\|_2^2
\]
and residual update energy $N_g$ after subtracting the mean drift. The empirical noise fraction is
\[
    \nu_g=\frac{N_g}{N_g+D_g+\epsilon}.
\]
As a directional-persistence statistic, let $c_g$ be mean cosine similarity between consecutive updates and define
\[
    o_g=\operatorname{clip}\!\left(1-\max(c_g,0),0,1\right).
\]
The heuristic adaptive rule is
\[
    \alpha_g=\operatorname{clip}\!\left(\frac{\nu_g+o_g}{2},0,1\right), \qquad \widehat\theta_T^{(g)} =(1-\alpha_g)\theta_T^{(g)}+\alpha_g\bar\theta_T^{(g)}.
\]
It is not an estimator of the exact optimum in \Cref{thm:tsa-risk}; it is a trajectory-statistic heuristic inspired by the same lag--variation intuition.

Across the five appendix trajectories, adaptive TSA over all tensors gains $0.002002\pm0.000170$ BPB. Applying the adaptive rule only to the Muon-managed tensors gains $0.001951\pm0.000140$, recovering approximately 97\% of the all-tensor benefit. This observation motivated the initial optimizer-aligned two-group study below.

\subsection{Initial optimizer-aligned structured TSA and parameter localization}
\label{app:structured-tsa}

The adaptive results suggest that shrinkage gains differ across tensors. We test a simpler, predeclared split based on NanoChat's optimizer parameter groups.

Let $P_A$ project onto the flattened coordinates assigned to Muon by NanoChat's native optimizer, and let $P_B=I-P_A$. Group $A$ contains 84,935,088 parameters, or 29.67\% of the model. In pure-AdamW runs we retain the identical coordinate partition even though all parameters are trained with AdamW. Structured TSA is
\begin{equation}
    \widehat\theta_T(\alpha_A,\alpha_B) = \theta_T +\alpha_A P_A(\bar\theta_T-\theta_T) +\alpha_B P_B(\bar\theta_T-\theta_T).
    \label{eq:structured-tsa}
\end{equation}
Scalar TSA is the diagonal restriction $\alpha_A=\alpha_B$.

The development grids select the same off-diagonal pair, $(\alpha_A,\alpha_B)=(0.625,0.250)$, under both optimizers (\Cref{fig:app-structured-development}). The rule and group-schema hashes are written before any confirmation evaluation.

\begin{figure}[h]
    \centering
    \includegraphics[width=\linewidth]{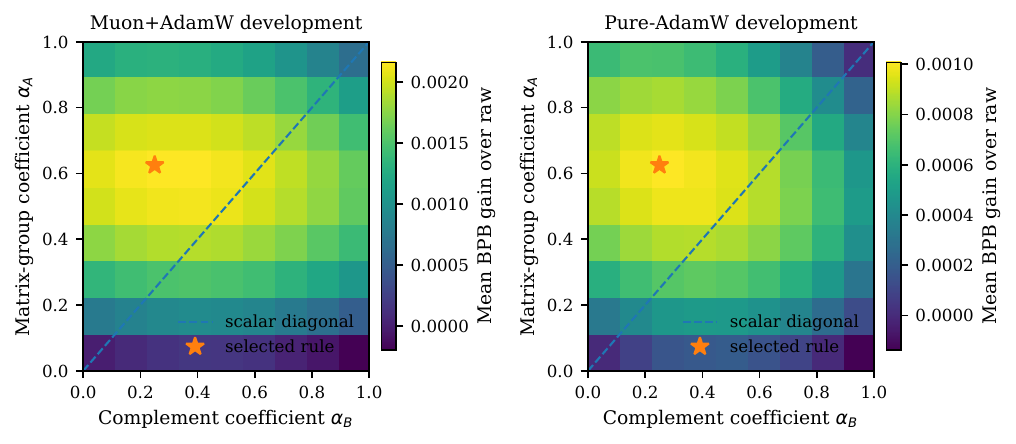}
    \caption{\textbf{Development surfaces for the initial optimizer-aligned two-group TSA.} Each panel reports mean paired BPB gain over the raw final iterate across five development streams at the $10\%$ floor. The dashed line is scalar TSA. The same off-diagonal pair is selected under the native and pure-AdamW optimizers.}
    \label{fig:app-structured-development}
\end{figure}

\begin{table}[h]
    \centering
    \small
    \caption{\textbf{Frozen structured TSA versus frozen scalar TSA.} Development selected $(\alpha_A,\alpha_B)=(0.625,0.250)$ for both optimizers, scalar $\alpha=0.55$ for the native optimizer, and scalar $\alpha=0.50$ for pure AdamW. Positive values in the final column favor structured TSA.}
    \label{tab:app-structured-confirmation}
    \begin{tabular}{llccc}
        \toprule
        Optimizer & Floor & Scalar gain & Structured gain & Structured $-$ scalar \\
        \midrule
        Muon+AdamW
        & 5\%
        & $0.000711\pm0.000098$
        & $0.000730\pm0.000108$
        & $0.000019\pm0.000049$ \\
        & 10\%
        & $0.002113\pm0.000154$
        & $0.002153\pm0.000120$
        & $0.000040\pm0.000050$ \\
        & 15\%
        & $0.005312\pm0.000155$
        & $0.005369\pm0.000117$
        & $0.000057\pm0.000070$ \\
        \midrule
        Pure AdamW
        & 5\%
        & $0.000192\pm0.000081$
        & $0.000243\pm0.000071$
        & $0.000051\pm0.000033$ \\
        & 10\%
        & $0.000945\pm0.000116$
        & $0.000992\pm0.000105$
        & $0.000046\pm0.000017$ \\
        & 15\%
        & $0.002609\pm0.000242$
        & $0.002597\pm0.000243$
        & $-0.000012\pm0.000037$ \\
        \bottomrule
    \end{tabular}
\end{table}

On the native confirmation runs, structured TSA provides only $0.000040\pm0.000050$ BPB beyond the best frozen scalar rule at the predeclared $10\%$ decision gate. The interval includes zero, and the additional gains at $5\%$ and $15\%$ are similarly small. We therefore retain scalar TSA as the main method. Under pure AdamW, the structured increment is positive and small at $5\%$ and $10\%$, but disappears at $15\%$. These findings show measurable heterogeneity without establishing a practically important advantage for the extra parameter.

The group-only ablation is more revealing. Shrinking only the 29.67\% matrix group recovers most of the full TSA gain, whereas shrinking only the much larger complement produces a substantially smaller improvement. This asymmetry persists when all tensors are trained with AdamW (\Cref{fig:app-group-localization,tab:app-group-localization}).

\begin{figure}[h]
    \centering
    \includegraphics[width=\linewidth]{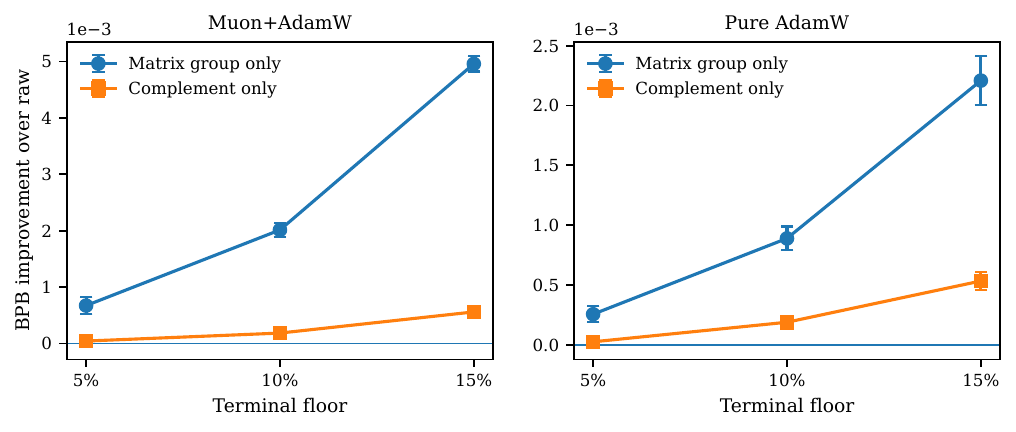}
    \caption{\textbf{The initial two-group split localizes most averaging gain to the matrix parameter group.} Points show paired BPB improvement over the raw final iterate when only one predeclared parameter group is shrunk. The same coordinate partition is used under both optimizers. Bars are 95\% $t$-intervals across five confirmation streams.}
    \label{fig:app-group-localization}
\end{figure}

\begin{table}[h]
    \centering
    \small
    \caption{\textbf{Where the averaging gain is localized.} ``Matrix group'' is the 29.67\% of parameters assigned to Muon in the native optimizer; the identical parameter partition is retained in pure-AdamW runs. All values are paired BPB improvements over the raw final iterate.}
    \label{tab:app-group-localization}
    \begin{tabular}{llccc}
        \toprule
        Optimizer & Floor & Matrix group only & Complement only & Matrix $-$ complement \\
        \midrule
        Muon+AdamW
        & 5\%
        & $0.000673\pm0.000149$
        & $0.000044\pm0.000072$
        & $0.000629\pm0.000192$ \\
        & 10\%
        & $0.002013\pm0.000120$
        & $0.000186\pm0.000049$
        & $0.001828\pm0.000090$ \\
        & 15\%
        & $0.004957\pm0.000136$
        & $0.000560\pm0.000074$
        & $0.004396\pm0.000150$ \\
        \midrule
        Pure AdamW
        & 5\%
        & $0.000255\pm0.000065$
        & $0.000024\pm0.000028$
        & $0.000231\pm0.000056$ \\
        & 10\%
        & $0.000889\pm0.000099$
        & $0.000188\pm0.000032$
        & $0.000701\pm0.000081$ \\
        & 15\%
        & $0.002208\pm0.000204$
        & $0.000533\pm0.000074$
        & $0.001676\pm0.000170$ \\
        \bottomrule
    \end{tabular}
\end{table}

Because the same pattern survives pure AdamW, it cannot be attributed solely to Muon updates. The optimizer-aligned matrix/complement partition is therefore best viewed as a useful first proxy for parameter-role or trajectory heterogeneity rather than as a causal Muon-versus-AdamW explanation. It establishes two facts that motivate the extension below: hidden matrices prefer substantially stronger shrinkage than the complement, while the corresponding two-parameter estimator adds only a small amount beyond scalar TSA.

\FloatBarrier

\subsection{Extending structured TSA to parameter-role groupings}
\label{app:groupwise-tsa}

The optimizer-aligned matrix/complement partition in \Cref{app:structured-tsa} provides a useful first localization: the hidden matrix block prefers substantially stronger shrinkage than the complement, and this asymmetry remains when all parameters are trained with AdamW. At the same time, the corresponding two-parameter estimator improves only modestly over scalar TSA. We therefore retain that split as an explicit baseline and ask whether a broader, but still low-dimensional family of parameter-role partitions explains additional out-of-sample gain.

Equation~\eqref{eq:structured-tsa} is the $G=2$ instance of a general groupwise estimator. Let $\{P_g\}_{g=1}^{G}$ be disjoint coordinate projections satisfying $\sum_{g=1}^{G}P_g=I$. We define
\begin{equation}
    \widehat\theta_T(\boldsymbol\alpha) = \theta_T + \sum_{g=1}^{G} \alpha_g P_g(\bar\theta_T-\theta_T), \qquad \boldsymbol\alpha\in[0,1]^G.
    \label{eq:groupwise-tsa}
\end{equation}
Scalar TSA is the restricted family $\alpha_1=\cdots=\alpha_G$; the matrix/complement rule above is one structured member of this larger family.

\paragraph{Development and confirmation protocol.}
All trajectories in this study use the $10\%$ terminal floor and the main checkpoint window $K=8,s=32$. On eight native Muon+AdamW development trajectories, we evaluated 290 predeclared rules comprising a scalar baseline and 12 structured families. These included the earlier hidden-matrices-versus-rest partition, several functional and geometric subdivisions of the hidden block, trajectory-statistic rules, and partitions that separately expose embedding, unembedding, and remaining parameters. Rules were compared on a 32-batch calibration block. We then froze the scalar baseline, the winner from each structured family, and the overall structured winner before evaluating any new trajectory.

Confirmation used eight entirely fresh native trajectories and eight paired pure-AdamW trajectories, each evaluated on the untouched 96-batch holdout. The same parameter partitions and coefficients were transferred to pure AdamW without retuning. Unless noted otherwise, intervals in this subsection are 95\% Student-$t$ intervals across the eight trajectories.

\begin{figure}[t]
    \centering
    \includegraphics[width=0.98\linewidth]{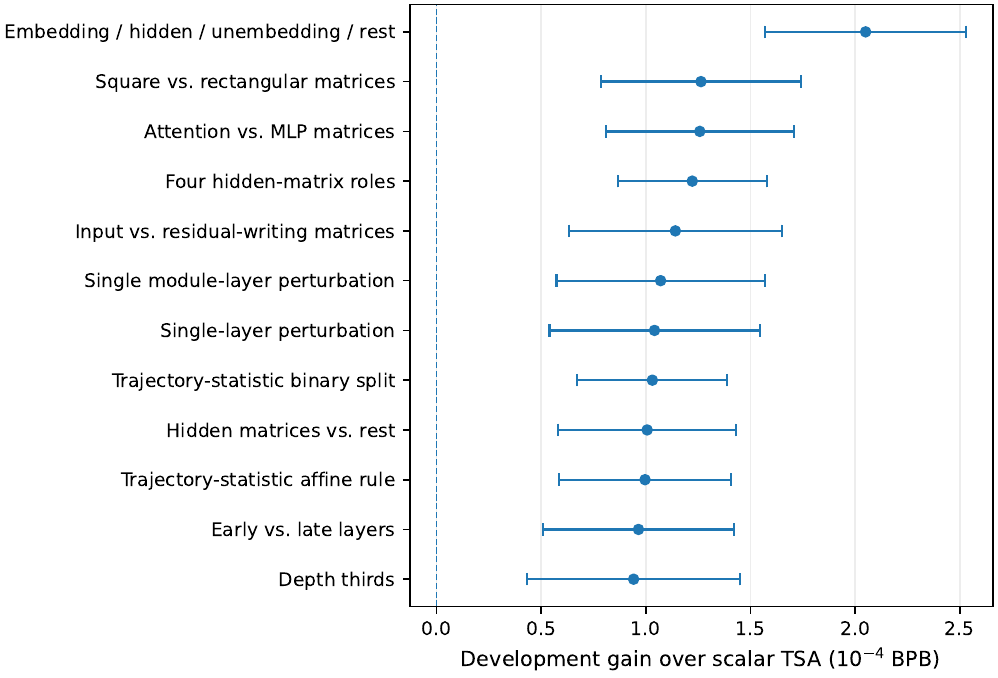}
    \caption{\textbf{Development search over parameter-group structures.} Each point is the best candidate within one predeclared structured family, measured relative to the development-selected scalar TSA rule; bars are 95\% $t$-intervals across eight development trajectories. Because these data were used for selection, the estimates are descriptive rather than confirmatory. The ``Hidden matrices vs. rest'' row is the extension's direct version of the initial optimizer-aligned split in \Cref{app:structured-tsa}; the embedding/hidden/unembedding/rest family produced the overall development winner.}
    \label{fig:app-group-search-development}
\end{figure}

The selected rule separates embedding parameters, hidden transformation matrices, the unembedding matrix, and the remaining scalar/other parameters:
\begin{equation}
    (\alpha_{\mathrm{emb}},\alpha_{\mathrm{hidden}}, \alpha_{\mathrm{unemb}},\alpha_{\mathrm{rest}}) = (0,0.65,0.45,0.25).
    \label{eq:frozen-groupwise-rule}
\end{equation}
The initial localization is therefore preserved rather than overturned: hidden matrices still receive strong shrinkage. The additional gain comes primarily from refining the former complement, especially by separating the input embedding from the unembedding matrix and the remaining parameters.

Because the final checkpoint is included in the $K$-checkpoint average, its total weight in group $g$ is $w_{T,g}=1-(K-1)\alpha_g/K$. For $K=8$, the frozen rule therefore has the interpretation shown in \Cref{tab:app-groupwise-rule}.

\begin{table}[h]
    \centering
    \small
    \caption{\textbf{Frozen parameter-role shrinkage rule.} The final column is the total weight placed on the final checkpoint after accounting for its inclusion in the checkpoint average.}
    \label{tab:app-groupwise-rule}
    \begin{tabular}{lcc}
        \toprule
        Parameter group & $\alpha_g$ & Newest-checkpoint weight $w_{T,g}$ \\
        \midrule
        Embedding parameters         & 0.00 & 1.000 \\
        Hidden matrices              & 0.65 & 0.431 \\
        Unembedding matrix           & 0.45 & 0.606 \\
        Scalars and other parameters & 0.25 & 0.781 \\
        \bottomrule
    \end{tabular}
\end{table}
\FloatBarrier

\paragraph{Fresh confirmation.}
On the eight new native trajectories, the frozen role-based rule improved over the suite-selected scalar baseline $\alpha=0.50$ by $0.000153\pm0.000051$ BPB, with 95\% interval $[0.000103,0.000204]$. It also improved directly over the paper's established scalar setting $\alpha=0.55$ by $0.000138\pm0.000045$ BPB and over the earlier hidden-matrix/rest rule by $0.000108\pm0.000013$ BPB. Relative to the raw final iterate of the same $10\%$-floor trajectories, the selected groupwise estimator gained $0.002316\pm0.000093$ BPB. Thus scalar TSA supplies most of the total averaging benefit, while parameter-role conditioning provides a smaller but resolved additional gain.

The same frozen rule improved over scalar $\alpha=0.50$ by $0.000178\pm0.000023$ BPB under pure AdamW and improved over the hidden/rest rule by $0.000130\pm0.000013$ BPB. This transfer reinforces the conclusion of the initial localization experiment: the signal cannot be explained solely by Muon updates. It is instead consistent with heterogeneity associated with parameter role, tensor geometry, group-specific learning-rate scale, or the resulting terminal dynamics. Pure AdamW does not distinguish among those mechanisms because the parameter groups retain their different base learning rates.

\begin{figure}[t]
    \centering
    \includegraphics[width=0.98\linewidth]{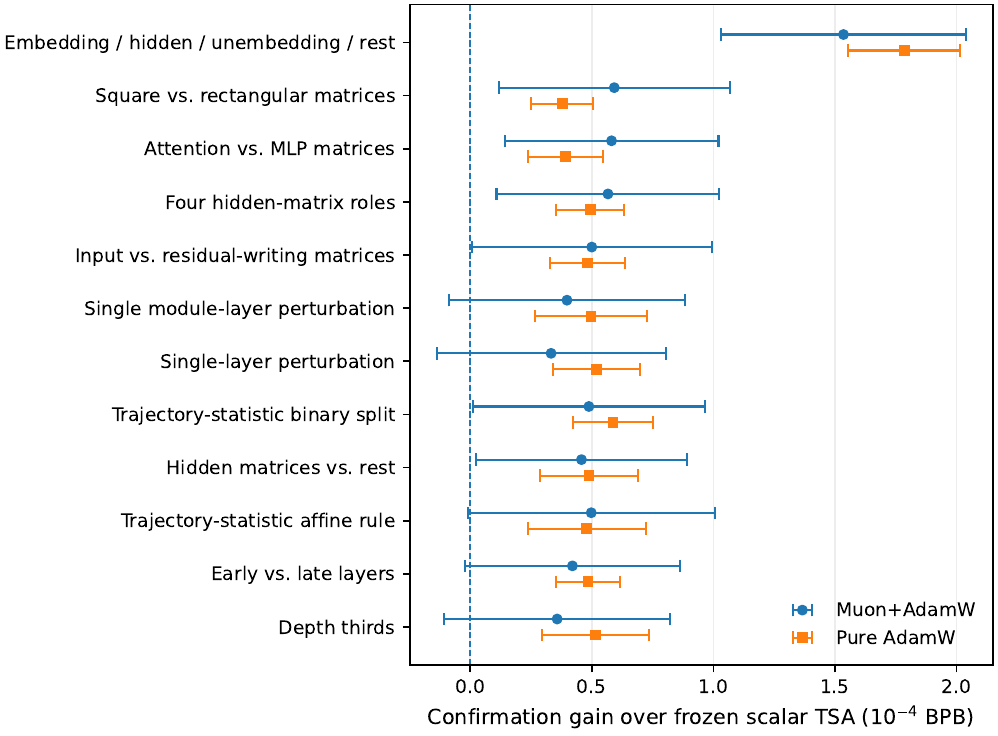}
    \caption{\textbf{Frozen groupwise rules on fresh confirmation trajectories.} Points show the gain of each development-selected family winner over the frozen scalar $\alpha=0.50$ baseline. Circles use eight fresh native Muon+AdamW trajectories; squares transfer the identical rules to eight paired pure-AdamW trajectories without retuning. Bars are 95\% $t$-intervals. The ``Hidden matrices vs. rest'' row carries forward the initial two-group experiment, while the role-based embedding/hidden/unembedding/rest rule provides a further confirmed gain in both optimizer regimes.}
    \label{fig:app-group-search-confirmation}
\end{figure}
\FloatBarrier

Several alternative partitions also carried smaller confirmation signals, including attention-versus-MLP, square-versus-rectangular, and four-way hidden-matrix role splits. Their gains were substantially below that of the selected role-based split. Consistently, the measured trajectory scores of QKV, attention-output, MLP-up, and MLP-down matrices were tightly clustered between $0.708$ and $0.715$, providing little support for a large functional separation within the hidden matrix block. The combined evidence therefore supports a progression rather than a replacement: the initial optimizer-aligned split correctly localized most averaging value to hidden matrices, and the broader search subsequently identified additional heterogeneity within the complement. These experiments do not establish that the selected grouping is universally optimal.

\FloatBarrier

\subsection{Groupwise terminal activity as a mechanism probe}
\label{app:groupwise-floors}

The averaging experiments raise a separate question: if parameter groups prefer different returned estimators, should they also receive different amounts of terminal optimizer activity? We tested this as a fixed mechanism probe rather than a new hyperparameter search. The estimator in \Cref{eq:frozen-groupwise-rule} remained frozen, and its coefficients were mapped monotonically to terminal learning-rate floors via
\begin{equation}
    \rho_g =0.05+0.10\frac{\alpha_g}{0.65}.
    \label{eq:groupwise-floor-map}
\end{equation}
This gives floors of $5.0\%$ for embedding parameters, $15.0\%$ for hidden matrices, $11.92\%$ for the unembedding matrix, and $8.85\%$ for the remaining parameters. The schedule implementation directly routed each optimizer parameter group to one of these four buckets.

We trained eight new paired native trajectories under four fixed schedules: uniform $5\%$, uniform $10\%$, the mapped groupwise schedule, and a uniform floor equal to the parameter-count-weighted mean of the mapped floors ($8.576\%$). Within each stream, all arms shared initialization, data order, checkpoint window, and the 96-batch holdout. For every trajectory, we evaluated the raw final iterate, scalar TSA at $\alpha=0.55$, and the frozen role-based estimator. No schedule or estimator was selected using these trajectories.

For an estimator $E$, define the mapped-schedule effect relative to the parameter-count-matched uniform schedule as $\Delta_E=L_E(\mathrm{mapped})-L_E(\mathrm{matched})$, so positive values mean that the mapped schedule is worse. The corresponding interaction is
\begin{equation}
    \mathcal I_{\mathrm{group}} =\Delta_{\mathrm{raw}}-\Delta_{\mathrm{groupwise}}.
    \label{eq:groupwise-schedule-interaction}
\end{equation}

\begin{table}[h]
    \centering
    \small
    \caption{\textbf{Predeclared groupwise terminal-floor intervention.} Schedule effects are mapped-floor BPB minus the parameter-count-matched uniform-floor BPB; positive values indicate worse BPB. The interaction is the effect for the raw final iterate minus that for groupwise TSA. Intervals are 95\% $t$-intervals over eight paired trajectories.}
    \label{tab:app-groupwise-floor-interaction}
    \begin{tabular}{lcc}
        \toprule
        Returned estimator & Mapped-floor effect & 95\% interval \\
        \midrule
        Raw final iterate
          & $+0.003404\pm0.000386$
          & $[+0.003019,+0.003790]$ \\
        Frozen groupwise TSA
          & $-0.000112\pm0.000294$
          & $[-0.000407,+0.000182]$ \\
        \midrule
        Interaction $\mathcal I_{\mathrm{group}}$
          & $\mathbf{+0.003516\pm0.000098}$
          & $[+0.003418,+0.003614]$ \\
        \bottomrule
    \end{tabular}
\end{table}
\FloatBarrier

The tested heterogeneous schedule therefore did not establish a better final training recipe: under the frozen groupwise estimator, its difference from the matched uniform schedule was small and unresolved. Its effect nevertheless depended strongly on the returned estimator. The mapped allocation made the raw final iterate worse by $0.003404$ BPB, while the groupwise estimator removed essentially all of that relative penalty, yielding the precisely estimated interaction in \Cref{tab:app-groupwise-floor-interaction}.

Parameter-specific averaging also became more valuable on the heterogeneous trajectory. Within the mapped arm, frozen groupwise TSA improved over the raw final iterate by $0.005114\pm0.000174$ BPB and over scalar TSA at $\alpha=0.55$ by $0.000322\pm0.000061$ BPB. The complete mapped-schedule plus groupwise-TSA output improved over the paired uniform-$5\%$ raw final iterate by $0.000991\pm0.000375$ BPB. As an ancillary check, these eight new streams independently reproduced the ordinary uniform-$5\%$ to uniform-$10\%$ schedule--estimator interaction at $0.001474\pm0.000086$ BPB, close to the main confirmation estimate.

The parameter-count-matched control equalizes the mean floor across parameter coordinates, not optimizer-update energy, displacement, or noise injection; the corresponding optimizer groups have different base learning rates and update geometries. We therefore interpret this experiment as evidence that the value of parameter-specific output estimation depends on how terminal activity is distributed across parameter groups, not as evidence that the particular mapped cooldown should replace a uniform schedule.

\subsection{Full preliminary single-trajectory sweeps}
\label{app:full-preliminary-sweeps}

Before the replicated confirmation experiments, we performed two denser diagnostic sweeps on individual depth-12 trajectories. The first holds the training trajectory fixed and varies the scalar TSA coefficient $\alpha$. The second returns the raw final iterate and varies only the terminal learning-rate floor. These experiments provide higher-resolution views of the response surfaces, but their uncertainty reflects evaluation batches within a single trained trajectory rather than variation across independent training runs. We therefore treat them as descriptive preliminary evidence; the corresponding replicated main-text experiments provide the stronger inferential results.

\paragraph{Scalar shrinkage sweep.}
Holding the baseline $5\%$-floor trajectory fixed, partial shrinkage produces a broad interior optimum, while full LAWA eventually becomes worse than the raw final iterate. This preliminary response motivated the replicated estimator sweep in \Cref{fig:d12-averaging}.

\begin{table}[h]
    \centering
    \small
    \caption{\textbf{Preliminary fixed-trajectory scalar sweep.}
    Improvement is BPB for the raw final iterate minus estimator BPB. Intervals quantify paired evaluation-batch uncertainty for one trajectory.}
    \label{tab:app-preliminary-alpha}
    \begin{tabular}{ccc}
        \toprule
        $\alpha$ & Validation BPB $\downarrow$ & Improvement vs. raw final iterate $\uparrow$ \\
        \midrule
        0.00 & 1.061224 & --- \\
        0.10 & 1.060872 & $+0.000352\pm0.000074$ \\
        0.20 & 1.060623 & $+0.000602\pm0.000104$ \\
        0.30 & 1.060483 & $+0.000741\pm0.000138$ \\
        0.40 & 1.060458 & $+0.000766\pm0.000176$ \\
        0.50 & 1.060499 & $+0.000725\pm0.000212$ \\
        0.55 & 1.060554 & $+0.000670\pm0.000225$ \\
        0.60 & 1.060679 & $+0.000546\pm0.000258$ \\
        0.65 & 1.060760 & $+0.000465\pm0.000288$ \\
        0.70 & 1.060852 & $+0.000372\pm0.000299$ \\
        0.75 & 1.060985 & $+0.000239\pm0.000324$ \\
        0.80 & 1.061190 & $+0.000035\pm0.000346$ \\
        0.90 & 1.061570 & $-0.000346\pm0.000373$ \\
        1.00 & 1.062068 & $-0.000844\pm0.000424$ \\
        \bottomrule
    \end{tabular}
\end{table}

\paragraph{Terminal-floor sweep for the raw final iterate.}
We separately vary only the terminal learning-rate floor while returning the raw final iterate. The raw final iterate worsens monotonically over the tested range, and the degradation becomes steeper at the larger floors (\Cref{fig:app-preliminary-floor}). This shape is consistent with the replicated $5\%$, $10\%$, and $15\%$ comparison in \Cref{fig:raw-floor-replicated}.

\begin{figure}[h]
    \centering
    \includegraphics[width=0.68\linewidth]
    {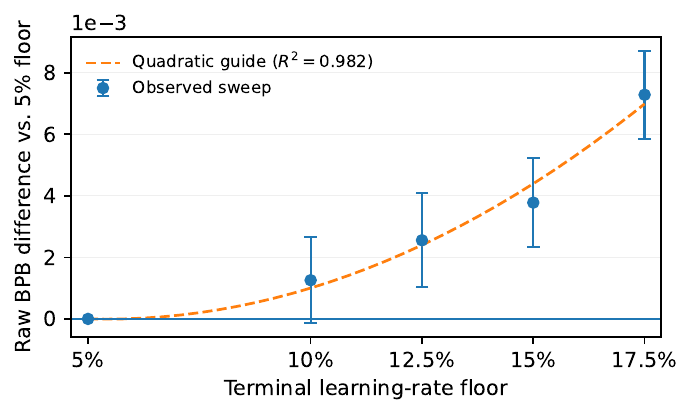}
    \caption{\textbf{Preliminary response of the raw final iterate across terminal floors.}
    Points show BPB differences for the raw final iterate relative to the $5\%$ floor along one training trajectory; bars quantify 95\% paired evaluation-batch uncertainty. The dashed curve is a least-squares quadratic guide, constrained to pass through the $5\%$ baseline ($R^2=0.982$). The fit is descriptive only: unlike the exact quadratic dependence of the local TSA surrogate on shrinkage coefficient $\alpha$, the theory does not imply that BPB for the raw final iterate must be quadratic in the terminal floor.}
    \label{fig:app-preliminary-floor}
\end{figure}

\begin{table}[t]
    \centering
    \small
    \caption{\textbf{Single-trajectory floor sweep without averaging.}
    Differences are relative to the $5\%$ floor. Intervals quantify
    paired evaluation-batch uncertainty for one trajectory.}
    \label{tab:app-preliminary-floor}
    \begin{tabular}{lcc}
        \toprule
        Floor & Validation BPB $\downarrow$ & Difference vs. 5\% $\downarrow$ \\
        \midrule
        5\% & 1.061224 & --- \\
        10\% & 1.062482 & $+0.001257\pm0.001399$ \\
        12.5\% & 1.063779 & $+0.002555\pm0.001529$ \\
        15\% & 1.065003 & $+0.003779\pm0.001450$ \\
        17.5\% & 1.068508 & $+0.007284\pm0.001431$ \\
        \bottomrule
    \end{tabular}
\end{table}

\subsection{Replicated quadratic response}
\label{app:quadratic-response}

Under the local quadratic surrogate, the TSA risk is exactly quadratic in the shrinkage coefficient $\alpha$ for a fixed trajectory. We therefore ask whether the observed validation response has the corresponding shape. For each optimizer and terminal floor, we fit
\begin{equation}
    g(\alpha)=b\alpha+c\alpha^2, \qquad g(0)=0,
    \label{eq:empirical-quadratic-fit}
\end{equation}
to the mean paired BPB improvement over the raw final iterate across the five confirmation trajectories. Because $g$ denotes improvement rather than risk, a beneficial interior optimum corresponds to a concave response. The fitted vertex is $\widehat{\alpha}^{\star}=-b/(2c)$.

\begin{table}[h]
\centering
\small
\caption{\textbf{Quadratic fits to the replicated shrinkage responses.} For each optimizer and terminal floor, the best tested coefficient on the post-freeze scalar grid is compared with the vertex of the fitted quadratic. Fits use the mean paired BPB improvement over the raw final iterate across five confirmation trajectories.}
\label{tab:app-quadratic-fits}
\begin{tabular}{llrrr}
\toprule
Optimizer & Floor & Best tested $\alpha$ & $\widehat{\alpha}^{\star}$ & $R^2$ \\
\midrule
Muon+AdamW
& 5\%  & 0.40 & 0.419 & 0.99977 \\
& 10\% & 0.55 & 0.547 & 0.99976 \\
& 15\% & 0.70 & 0.701 & 0.99970 \\
\midrule
Pure AdamW
& 5\%  & 0.30 & 0.330 & 0.99984 \\
& 10\% & 0.50 & 0.499 & 0.99950 \\
& 15\% & 0.65 & 0.668 & 0.99993 \\
\bottomrule
\end{tabular}
\end{table}

\begin{figure}[h]
    \centering
    \includegraphics[width=0.85\linewidth]{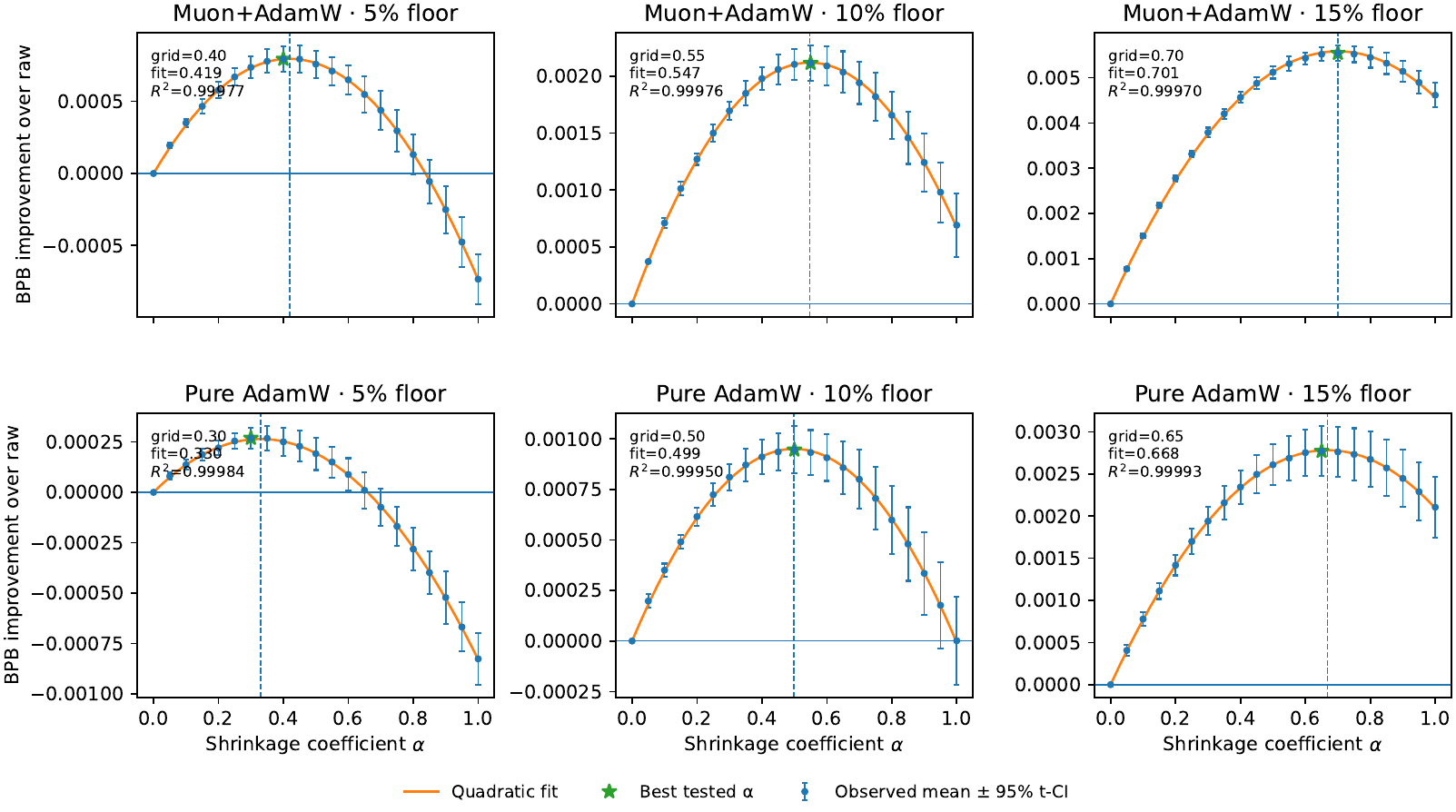}
    \caption{\textbf{The replicated shrinkage responses are nearly quadratic.} Each panel shows mean paired BPB improvement over the raw final iterate across five confirmation trajectories; bars are 95\% $t$-intervals across trajectories. Solid curves are fits of $g(\alpha)=b\alpha+c\alpha^2$ constrained by $g(0)=0$. Stars mark the best tested scalar-grid coefficients, and dashed vertical lines mark the fitted vertices.}
    \label{fig:app-alpha-quadratic}
\end{figure}

The quadratic description is accurate across all six optimizer-floor conditions, with $R^2\geq 0.99950$. The fitted vertices closely track the independently observed grid optima. Under Muon+AdamW, the fitted optimum moves from $0.419$ to $0.547$ to $0.701$ as the terminal floor rises from 5\% to 10\% to 15\%; under pure AdamW it moves from $0.330$ to $0.499$ to $0.668$. Thus the near-quadratic response is not specific to one trajectory, floor, or optimizer regime, and the preferred amount of shrinkage systematically increases with terminal activity. This provides an empirical consistency check for the local quadratic description and its comparative-static interpretation; it does not imply that global neural-network training dynamics are quadratic.

\clearpage
\section{Proof of the TSA risk theorem}
\label{app:tsa-proof}

\begin{proof}[Proof of \Cref{thm:tsa-risk}]
By \Cref{eq:tsa-error-decomposition},
\[
    \widehat\theta_T(\alpha)-\theta^\star =m-\alpha A+\epsilon_0+\alpha q.
\]
The random component $\epsilon_0+\alpha q$ has mean zero. Therefore,
\begin{align}
    2\mathcal R(\alpha) &=\E\left\|m-\alpha A+\epsilon_0+\alpha q\right\|_H^2 \\
    &=\|m-\alpha A\|_H^2+\E\|\epsilon_0+\alpha q\|_H^2 \\
    &=\|m-\alpha A\|_H^2+\E\|\epsilon_0\|_H^2 +2\alpha C+\alpha^2V_q.
\end{align}
Expanding the deterministic term,
\[
    \|m-\alpha A\|_H^2 =\|m\|_H^2-2\alpha\langle m,A\rangle_H+\alpha^2\|A\|_H^2.
\]
Since
\[
    \mathcal R(0) =\frac12\left(\|m\|_H^2+\E\|\epsilon_0\|_H^2\right),
\]
dividing by two yields
\[
    \mathcal R(\alpha) =\mathcal R(0) +\alpha\bigl(-\langle m,A\rangle_H+C\bigr) +\frac{\alpha^2}{2}\bigl(\|A\|_H^2+V_q\bigr).
\]
Differentiating gives
\[
    \mathcal R'(\alpha) =-\langle m,A\rangle_H+C +\alpha\bigl(\|A\|_H^2+V_q\bigr).
\]
When $\|A\|_H^2+V_q>0$, the unconstrained minimizer is
\[
    \alpha_{\mathrm{unc}} =\frac{\langle m,A\rangle_H-C}{\|A\|_H^2+V_q}.
\]
Projection onto $[0,1]$ gives \Cref{eq:alpha-star}. The optimum is strictly interior exactly when
\[
    0<\langle m,A\rangle_H-C<\|A\|_H^2+V_q.
\]
\end{proof}
\clearpage
\section{One-step quadratic analysis}
\label{app:one-step-quadratic}

Consider the one-dimensional quadratic loss
\begin{equation}
    L(x)=L(0)+\frac{h}{2}x^2,
    \qquad h>0.
    \label{eq:one-step-loss}
\end{equation}
Let $x^{-}\neq 0$ be a deterministic parameter value immediately before one terminal stochastic-gradient update, and let
\begin{equation}
    x^{+}=(1-\eta h)x^{-}-\eta\xi,
    \qquad
    \mathbb{E}[\xi]=0,
    \qquad
    \mathbb{E}[\xi^2]=\sigma^2.
    \label{eq:one-step-update}
\end{equation}
Using the two checkpoints $x^{-}$ and $x^{+}$, TSA returns
\begin{equation}
\begin{aligned}
    \widehat{x}_\alpha
    &=(1-\alpha)x^{+}
      +\frac{\alpha}{2}\left(x^{-}+x^{+}\right) \\
    &=\left(1-\frac{\alpha}{2}\right)x^{+}
      +\frac{\alpha}{2}x^{-},
    \qquad \alpha\in[0,1].
\end{aligned}
\label{eq:one-step-tsa}
\end{equation}
Define $a_\alpha:=1-\alpha/2$. Substitution gives
\begin{equation}
    \widehat{x}_\alpha
    =\left(1-a_\alpha\eta h\right)x^{-}
     -a_\alpha\eta\xi.
    \label{eq:one-step-effective-update}
\end{equation}
Consequently, the expected excess quadratic risk is exactly
\begin{equation}
    R_\alpha(\eta)
    :=\mathbb{E}\!\left[L(\widehat{x}_\alpha)-L(0)\right]
    =\frac{h}{2}
      \left[
        \left(1-a_\alpha\eta h\right)^2(x^{-})^2
        +a_\alpha^2\eta^2\sigma^2
      \right].
    \label{eq:one-step-risk}
\end{equation}

\paragraph{Corollary.}
Suppose $\eta$ is selected from $[0,\eta_{\max}]$. Then
\begin{equation}
    \eta_\alpha^\star
    =\Pi_{[0,\eta_{\max}]}
      \left(
        \frac{h(x^{-})^2}
        {a_\alpha\left(h^2(x^{-})^2+\sigma^2\right)}
      \right).
    \label{eq:one-step-optimal-eta}
\end{equation}
If the optima for the raw final iterate and TSA are not clipped by $\eta_{\max}$, then
\begin{equation}
    \eta_\alpha^\star
    =\frac{\eta_{0}^\star}{1-\alpha/2}.
    \label{eq:one-step-optimal-eta-ratio}
\end{equation}
In particular, $\eta_\alpha^\star>\eta_0^\star$ for every $\alpha\in(0,1]$, and $\eta_1^\star=2\eta_0^\star$.

\paragraph{Proof.}
Differentiating \eqref{eq:one-step-risk} yields
\begin{equation}
    R_\alpha'(\eta)
    =h a_\alpha
      \left[
        -h(x^{-})^2 +a_\alpha\eta\left(h^2(x^{-})^2+\sigma^2\right)
      \right],
\end{equation}
while
\begin{equation}
    R_\alpha''(\eta)
    =h a_\alpha^2
      \left(h^2(x^{-})^2+\sigma^2\right)>0.
\end{equation}
The unconstrained minimizer is therefore
\begin{equation}
    \eta_{\alpha,\mathrm{unc}}^\star
    =\frac{h(x^{-})^2}
      {a_\alpha\left(h^2(x^{-})^2+\sigma^2\right)}.
\end{equation}
Projection onto $[0,\eta_{\max}]$ gives \eqref{eq:one-step-optimal-eta}. Since $a_0=1$ and $a_\alpha=1-\alpha/2$, the interior ratio follows. \hfill$\square$

\paragraph{Remark.}
In this minimal one-update model, $a_\alpha\eta_\alpha^\star$ is constant in $\alpha$, so the jointly optimized minimum risk is also constant. The result therefore isolates a change in the preferred schedule rather than claiming an automatic improvement in the best attainable risk. Multi-step trajectories, longer checkpoint windows, and nonstationary late-training dynamics can break this exact rescaling equivalence.

\clearpage
\section{LLM disclosure}
We used LLMs in three ways. First, essentially all experiments were set up and implemented with LLM coding, followed by a separate LLM review of code correctness. Second, we asked LLMs to critique manuscript drafts and incorporated suggestions when we judged them appropriate. Third, we used LLMs to find related papers; we read each paper before citing it to confirm its relation to our work.
\end{document}